\renewcommand{\geq}{\geqslant}
\renewcommand{\leq}{\leqslant}
\newcommand{\eps}{\varepsilon}
\theoremstyle{plain}
\newtheorem{lemma}{Lemma}
\newtheorem{prop}{Proposition}
\theoremstyle{definition}
\newtheorem{definition}{Definition}
\theoremstyle{remark}
\newtheorem{example}{Example}
\newtheorem{remark}{Remark}
\DeclareMathOperator{\Pois}{Pois}
\DeclareMathOperator{\Hom}{Hom}
\DeclareMathOperator{\diag}{diag}
\DeclareMathOperator{\Orth}{O}
\DeclareMathOperator{\GL}{GL}
\DeclareMathOperator{\divergence}{div}
\DeclareMathOperator*{\argmin}{argmin}
\DeclareMathOperator{\prox}{prox}
\newcommand{\R}{\mathbf R}
\newcommand{\Z}{\mathbf Z}
\newcommand{\N}{\mathbf N}
\newcommand{\SO}{\mathrm{SO}}
\newcommand{\SE}{\mathrm{SE}}
\DeclareMathOperator{\TV}{TV}
\DeclareMathOperator{\id}{id}
\begin{document}
\title{Equivariant neural networks for inverse problems}
\author{Elena~Celledoni$^1$, Matthias~J.~Ehrhardt$^2$, Christian~Etmann$^3$, Brynjulf~Owren$^1$, Carola-Bibiane~Sch\"onlieb$^3$ and Ferdia~Sherry$^{3\dagger}$}

\address{$^1$Department of Mathematical Sciences, NTNU, N-7491 Trondheim, Norway}
\address{$^2$Institute for Mathematical Innovation, University of Bath, Bath BA2 7JU, UK}
\address{$^3$Department of Applied Mathematics and Theoretical Physics, University of Cambridge, Wilberforce Road, Cambridge CB3 0WA, UK}
\address{$^\dagger$Corresponding author, email address: \href{mailto:fs436@cam.ac.uk}{\texttt{\textup{fs436@cam.ac.uk}}}}
\date{\today}

\begin{abstract}
In recent years the use of convolutional layers to encode an inductive bias (translational equivariance) in neural networks has proven to be a very fruitful idea. The successes of this approach have motivated a line of research into incorporating other symmetries into deep learning methods, in the form of group equivariant convolutional neural networks. Much of this work has been focused on roto-translational symmetry of $\R^d$, but other examples are the scaling symmetry of $\R^d$ and rotational symmetry of the sphere. In this work, we demonstrate that group equivariant convolutional operations can naturally be incorporated into learned reconstruction methods for inverse problems that are motivated by the variational regularisation approach. Indeed, if the regularisation functional is invariant under a group symmetry, the corresponding proximal operator will satisfy an equivariance property with respect to the same group symmetry. As a result of this observation, we design learned iterative methods in which the proximal operators are modelled as group equivariant convolutional neural networks. We use roto-translationally equivariant operations in the proposed methodology and apply it to the problems of low-dose computerised tomography reconstruction and subsampled magnetic resonance imaging reconstruction. The proposed methodology is demonstrated to improve the reconstruction quality of a learned reconstruction method with a little extra computational cost at training time but without any extra cost at test time.
\end{abstract}

\maketitle
\section{Introduction}

Deep learning has recently had a large impact on a wide variety of fields; research laboratories have published state-of-the-art results applying deep learning to sundry tasks such as playing Go~\cite{silver_mastering_2016}, predicting protein structures~\cite{senior_improved_2020} and generating natural language~\cite{brown_language_2020}. In particular, deep learning methods have also been developed to solve inverse problems, with some examples being~\cite{jin_deep_2017,adler_solving_2017,lunz_adversarial_2018}. In this work we investigate the use of equivariant neural networks for solving inverse imaging problems, i.e.~inverse problems where the solution is an image. Convolutional neural networks (CNNs)~\cite{lecun_convolutional_1998} are a standard tool in deep learning methods for images. By learning convolutional filters, CNNs naturally encode translational symmetries of images: if $\tau_h$ is a translation by $h\in\R^d$, and $k, f$ are functions on $\R^d$, we formally have the following relation (translational equivariance) 
\begin{equation}
  \tau_h [k\ast f] = k \ast [\tau_h f].
    \label{eq:translational_equivariance}
\end{equation}
This allows learned feature detectors to detect features regardless of their position (though not their orientation or scale) in an image. In many cases it may be desirable for these learned feature detectors to also work when images are transformed under other group transformations, i.e.\ one may ask that a property such as Equation~\eqref{eq:translational_equivariance} holds for a more general group transformation than the group of translations $\{\tau_h|h\in\R^d\}$. If natural symmetries of the problem are not built into the machine learning method and are not present in the training data, in the worst case, it can result in catastrophic failure as illustrated in Figure~\ref{fig:intro_catastrophic_failure}.
\begin{figure}[!htb]
  \centering
  \includegraphics[scale=1]{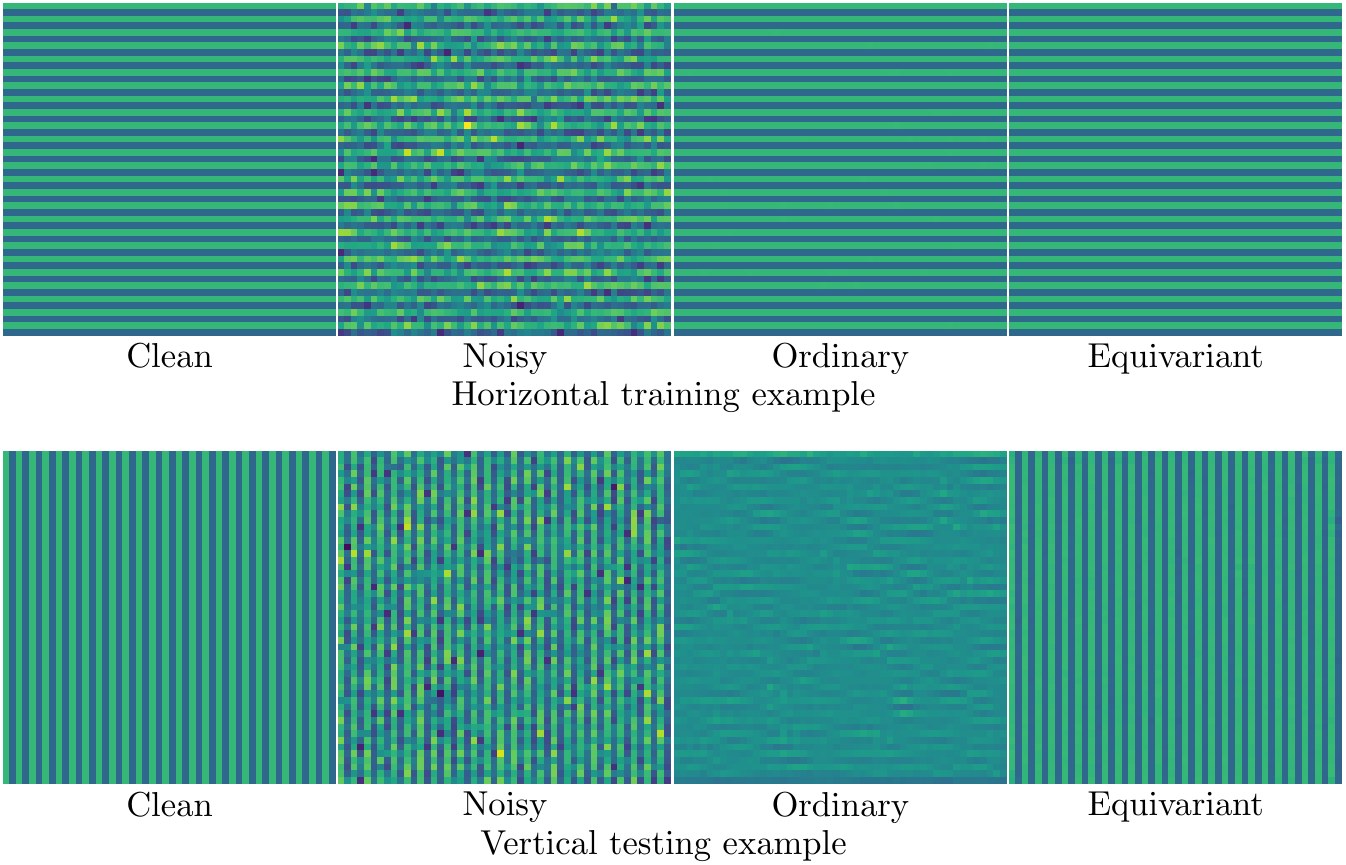}
    \caption{Roto-translationally (`Equivariant') and just translationally (`Ordinary') equivariant filters are trained to denoise on a single pair of ground truth and noisy images (`Clean' and `Noisy' in the top row), giving perfect denoising results on the training example. In the bottom row, we see the result of testing the learned filters on a rotated version of the training image; the ordinary filter completely fails at recovering the ground truth, whereas the equivariant filter performs as well as it did on the training image.}
\label{fig:intro_catastrophic_failure}
\end{figure}

To some extent, this problem is circumvented by augmenting the training data through suitable transformations, but it has been shown in classification and segmentation tasks that it is still beneficial to incorporate known symmetries directly into the architecture used, especially if the amount of training data is small~\cite{bekkers_roto-translation_2018, worrall_harmonic_2017, weiler_general_2019}. Furthermore, training on augmented data is not enough to guarantee that the final model satisfies the desired symmetries. There has recently been a considerable amount of work in this direction, in the form of group equivariant CNNs. Most of the focus has been on roto-translational symmetries of images~\cite{cohen_group_2016,dieleman_exploiting_2016,bekkers_roto-translation_2018, weiler_general_2019}, though there is also some work on incorporating scaling symmetries~\cite{sosnovik_scale-equivariant_2019,worrall_deep_2019} and even on equivariance to arbitrary Lie group symmetries~\cite{finzi_generalizing_2020}.

As mentioned before, we will concern ourselves with solving inverse imaging problems: given measurements $y$ that are related to an underlying ground truth image $u$ through a model
\begin{equation}
  \label{eq:inverse_problem_intro}
  y = \mathfrak{N}(A(u)),
\end{equation}
with $A$ the so-called forward operator and $\mathfrak{N}$ a noise-generating process, the goal is to estimate the image $u$ from the measurements $y$ as well as possible. Typical examples of inverse imaging problems include the problem of recovering an image from its line integrals as in computerised tomography~(CT)~\cite{hounsfield_computerized_1973}, or recovering an image from subsampled Fourier measurements as in magnetic resonance imaging~(MRI)~\cite{lauterbur_image_1973,mansfield_diffraction_1975}. The solution of an inverse problem is often complicated by the presence of ill-posedness: a problem is said to be well-posed in the sense of Hadamard~\cite{hadamard_sur_1902} if it satisfies a set of three conditions (existence of a solution, its uniqueness, and its continuous dependence on the measurements), and ill-posed if any of these conditions fail. 

It is a natural idea to try to apply equivariant neural networks to solve inverse imaging problems: there is useful knowledge about the relationship between a ground truth image and its measurements in the form of $A$ and the symmetries in both the measurement and image domain (the range and domain of $A$ respectively). Furthermore, training data tends to be considerably less abundant in medical and scientific imaging than in the computer vision and image analysis tasks that are typical of the deep learning revolution, such as ImageNet classification~\cite{krizhevsky_imagenet_2012}. This suggests that the lower sample complexity of equivariant neural networks (as compared to ordinary CNNs) may be harnessed in this setting with scarce data to learn better reconstruction methods. Finally, end users of the methods, e.g.~medical practitioners, are often skeptical of `black-box' methods and guarantees on the behaviour of the method, such as equivariance of the method to certain natural image transformations, may alleviate some of the concerns that they have.

We investigate the use of equivariant neural networks within the framework of learned iterative reconstruction methods~\cite{adler_solving_2017,putzky_recurrent_2017}, which constitute some of the most prototypical deep learning solutions to inverse problems. The designs of these methods are motivated by classical variational regularisation approaches~\cite{engl_regularization_1996}, which propose to overcome the ill-posedness of an inverse problem by estimating its solution as
\begin{equation}
  \label{eq:var_reg_intro}
  \hat u = \argmin_u d(A(u), y) + J(u),
\end{equation}
with $d$ a measure of discrepancy motivated by our knowledge of the noise-generating process $\mathfrak N$ and $J$ is a regularisation functional incorporating prior knowledge of the true solution. Learned iterative reconstruction methods, also known as unrolled iterative methods, are designed by starting from a problem such as Problem~\eqref{eq:var_reg_intro}, choosing an iterative optimisation method to solve it, truncating that method to a finite number of iterations, and finally replacing parts of it (e.g.~the proximal operators) by neural networks. We will show that these neural networks can naturally be chosen to be equivariant neural networks, and that doing so gives improved performance over choosing them to be ordinary CNNs. More precisely, our contributions in this work are as follows:

\subsection*{Our contributions}
We show that invariance of a functional to a group symmetry implies that its proximal operator satisfies an equivariance property with respect to that group. This insight can be combined with the unrolled iterative method approach: it makes sense for a regularisation functional to be invariant to roto-translations if there is no prior knowledge on the orientation and position of structures in the images, in which case the corresponding proximal operators are roto-translationally equivariant.

Motivated by these observations, we build learned iterative methods using roto-translationally equivariant building blocks. We show in a supervised learning setting that these methods outperform comparable methods that only use ordinary convolutions as building blocks, when applied to a low-dose CT reconstruction problem and a subsampled MRI reconstruction problem. This outperformance is manifested in two main ways: the equivariant method is better able to take advantage of small training sets than the ordinary one, and its performance is more robust to transformations that leave images in orientations not seen during training.

%%% Local Variables:
%%% mode: latex
%%% TeX-master: "main"
%%% End:

\section{Notation and background on groups and representations}
\label{sec:background}
In this section, we give an overview of the main concepts regarding groups and representations that are required to follow the main text. By a group $G$, we mean a set equipped with an associative binary operation $\cdot : G\times G\to G$ (usually the dot is omitted in writing), furthermore containing a neutral element $e$, such that $e\cdot g = g\cdot e = g$ for all $g\in G$ and a unique inverse $g^{-1}$ for each group element $g$, such that $g\cdot g^{-1} = g^{-1}\cdot g = e$. Given groups $G$ and $H$, we say that a map $\phi: G\to H$ is a group homomorphism if it respects the group structures:
\[\phi(g_1g_2) = \phi(g_1)\phi(g_2)\quad\text{for any}\quad g_1, g_2\in G.\]
Groups can be naturally used to describe symmetries of mathematical objects through the concept of group actions. Given a group $G$ and set $X$, we say that $G$ acts on $X$ if there is a function $T:G\times X \to X$ (the application of which we stylise as $T_g[x]$ for $g\in G, x\in X$) that obeys the group structure in the sense that
\begin{equation}
  \label{eq:group_action}
  T_{g_1}\circ T_{g_2} = T_{g_1 g_2}\quad\text{for any}\quad g_1,g_2\in G
\end{equation}
and $T_e = \id$. That is, the group action can be thought of as a group homomorphism from $G$ to the permutation group of $X$. If there is no ambiguity, the group action may just be written as $T_g[x]= g\cdot x = gx$. An important type of group actions is given by the group representations. If $V$ is a vector space, we will denote by $\GL(V)$ its general linear group, the group of invertible linear maps $V\to V$, with the group operation given by composition. A representation $\rho:G\to\GL(V)$ of a group $G$ which acts on $V$ is a group homomorphism, and so corresponds to a linear group action $T$ of $G$ on $V$: $\rho(g)x =T_g[x]$ for $x\in V$ and $g\in G$. Given a vector space $V$, any group $G$ has a representation on $V$ given by $\rho(g)=I$, which is the so-called trivial representation. If $V$ is additionally a Hilbert space, we will call $\rho$ a unitary representation if $\rho(g)$ is a unitary operator for each $g\in G$, i.e.~$\|\rho(g)x\|=\|x\|$ for all $x\in V$. Given a finite group $G=\{g_1,\ldots, g_n\}$, we can define the so-called regular representation $\rho$ of $G$ on $\R^n$ by
\[\rho(g_i)e_j = e_k,\]
where $\{e_1,\ldots,e_n\}$ is a basis of $\R^n$ and $k$ is such that $g_i g_j =g_k$. With this representation, each $\rho(g)$ is a permutation matrix, so $\rho$ is a unitary representation if the basis $\{e_1,\ldots,e_n\}$ is orthonormal.

In this work, the groups that we will consider take the form of a group of isometries on $\R^d$. These groups are represented by a semi-direct product $G=\R^d \rtimes H$, where $H$ is a subgroup of the orthogonal group $\Orth(d)$ of rotations and reflections:
\[\Orth(d) = \{R\in \GL(\R^d) | R^T = R^{-1}\}.\]
An important subgroup of $\Orth(d)$ is the special orthogonal group $\SO(d) = \{A\in \Orth(d)|\det(A)=1\}$, which represents the set of pure rotations in $\Orth(d)$.
Each element of the semi-direct product $G$ can be identified with a unique pair $(t, R)$ of $t\in \R^d$, the translation component, and $R\in H$, the rotation (and potentially reflection). The semi-direct product can naturally be encoded as a matrix using homogeneous coordinates
\[(t,R) \leftrightarrow \begin{pmatrix}R&t\\0 & 1\end{pmatrix},\]
so that the group product is given by a matrix product. $G$ naturally acts on a point $x\in\R^d$ through $T_{(t, R)} [x]=(t,R)x = Rx + t$.

In the experiments that we consider later in this work, we will consider the case $d=2$. In this case $\SO(2)$ has a simple description:
\[\SO(2) =\Big\{\Big(
  \begin{smallmatrix}
    \cos(\theta) &-\sin(\theta)\\
    \sin(\theta) &\cos(\theta) 
  \end{smallmatrix}\Big)\Big| \theta\in [0,2\pi)\Big\}.
\]
We will identify the groups $\Z_m$ of integers modulo $m$ with the subgroup of $\SO(2)$ given by
\[\Z_m = \Big\{\Big(
  \begin{smallmatrix}
    \cos(2\pi k/ m) &-\sin(2\pi k/m)\\
    \sin(2\pi k /m) & \cos(2\pi k /m)
  \end{smallmatrix}\Big)
  \Big|
  k\in \Z
  \Big\}.
\]

%%% Local Variables:
%%% mode: latex
%%% TeX-master: "main"
%%% End:

\section{Learnable equivariant maps}
\label{sec:equivariant_ops}
\label{sec:rt_equiv}
The concept of equivariance is well-suited to describing the group symmetries that a function might obey:
\begin{definition}
Given a general group $G$, a function $\Phi:\mathcal X\to \mathcal Y$ and group actions $T^{\mathcal X}, T^{\mathcal Y}$ of $G$ on $\mathcal X$ and $\mathcal Y$, $\Phi$ will be called equivariant if it satisfies
\begin{equation}
  \label{eq:general_equivariance}
  \Phi(T_g^{\mathcal X} [f]) = T_g^{\mathcal Y} [\Phi(f)]
\end{equation}
for all $f\in \mathcal X$ and $g\in G$.   
\end{definition}
Following the definition of equivariance, we see that equivariant functions have the convenient property that composing them results in an equivariant function, as long as the group actions on the inputs and outputs match in the appropriate way:
\begin{lemma}
Suppose that $G$ is a group that acts on sets $\mathcal X, \mathcal Y$ and $\mathcal Z$ through $T^{\mathcal X}, T^{\mathcal Y}$ and $T^{\mathcal Z}$. If $\Phi:\mathcal X\to\mathcal Y$ and $\Psi:\mathcal Y\to\mathcal Z$ are equivariant, then so is $\Psi\circ \Phi:\mathcal X\to\mathcal Z$.  
\end{lemma}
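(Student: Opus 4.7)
The plan is to verify the equivariance condition for $\Psi\circ \Phi$ directly by unwinding the definition and applying the hypothesis twice. The identity to establish, for every $f\in\mathcal X$ and every $g\in G$, is $(\Psi\circ\Phi)(T_g^{\mathcal X}[f]) = T_g^{\mathcal Z}[(\Psi\circ\Phi)(f)]$, which matches Equation~\eqref{eq:general_equivariance} with $\Phi$ replaced by $\Psi\circ\Phi$ and the input/output actions chosen to be $T^{\mathcal X}$ and $T^{\mathcal Z}$.

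First I would push $T_g^{\mathcal X}$ past $\Phi$ using the equivariance of $\Phi$, turning $\Phi(T_g^{\mathcal X}[f])$ into $T_g^{\mathcal Y}[\Phi(f)]$; then I would push $T_g^{\mathcal Y}$ past $\Psi$ using the equivariance of $\Psi$, turning $\Psi(T_g^{\mathcal Y}[\Phi(f)])$ into $T_g^{\mathcal Z}[\Psi(\Phi(f))]$. Each step is a single instance of Equation~\eqref{eq:general_equivariance}, applied to $\Phi$ with the element $f\in\mathcal X$ and to $\Psi$ with the element $\Phi(f)\in\mathcal Y$, respectively; concatenating the two gives the required identity.

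The two hypotheses dovetail exactly because the output-side action for $\Phi$ and the input-side action for $\Psi$ are the same action $T^{\mathcal Y}$, and it is this matching that makes the chain of equalities close cleanly. There is no real obstacle; the only subtlety worth flagging in the write-up is making this matching explicit, since without a common middle action the two equivariance identities could not be composed. The argument requires no extra structure (topology, linearity, unitarity, etc.) on $\mathcal X, \mathcal Y, \mathcal Z$ or on the actions beyond their being group actions in the sense of Equation~\eqref{eq:group_action}.
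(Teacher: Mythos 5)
Your proof is correct and is exactly the standard two-step unwinding that the paper intends (the paper omits the proof as immediate from the definition): apply equivariance of $\Phi$ to move $T_g^{\mathcal X}$ to $T_g^{\mathcal Y}$, then equivariance of $\Psi$ to move $T_g^{\mathcal Y}$ to $T_g^{\mathcal Z}$. Your remark about the middle action $T^{\mathcal Y}$ having to match on both sides is precisely the caveat the paper flags with ``as long as the group actions on the inputs and outputs match in the appropriate way.''
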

Based on this property it is clear that the standard approach to building neural networks (compose linear and nonlinear functions with learnable components in an alternating manner) can be used to build equivariant neural networks as long as linear and nonlinear functions with the desired equivariance can be constructed.
\begin{example}
Suppose that  $\mathcal X=L^2(\R^d, \R^{c_{\mathcal X}})$ and $\mathcal Y = L^2(\R^d, \R^{c_{\mathcal Y}})$, with the group $G=\R^d$ acting on $\mathcal X$ by $T^{\mathcal X}_h [f](x)= f(x - h)$, and in a similar way on $\mathcal Y$ by $T^{\mathcal Y}$. Ordinary CNNs~\cite{lecun_convolutional_1998}, with convolutional linear layers and pointwise nonlinear functions, are equivariant in this setting.
\end{example}

In this work, we will consider the group $G=\R^d\rtimes H$ for some subgroup $H$ of $\Orth(d)$ (see Section~\ref{sec:background} for some background), acting on vector-valued functions. To be more specific, we will let $\mathcal X=L^2(\R^d, \R^{d_{\mathcal X}})$ be the Hilbert space of square-integrable $\R^{d_{\mathcal X}}$-valued functions and assume that $\R^{d_{\mathcal X}}$ carries a representation $\pi_{\mathcal X} : H \to \GL(\R^{d_{\mathcal X}})$. Similarly, we will define $\mathcal Y=L^2(\R^d, \R^{d_{\mathcal Y}})$ and assume that $\pi_{\mathcal Y}:H\to\GL(\R^{d_{\mathcal Y}})$ is a representation of $H$. We define the group actions $T^{\mathcal X}$ and $T^{\mathcal Y}$ to be the induced representations, $\rho_{\mathcal X}$ and $\rho_{\mathcal Y}$, of $\pi_{\mathcal X}$ and $\pi_{\mathcal Y}$ on $\mathcal X$ and $\mathcal Y$ respectively. In the setting that we are considering, these representations take a particularly simple form. As mentioned in Section~\ref{sec:background}, since we assume that $G$ takes the semi-direct product form $\R^d\rtimes H$, each group element $g\in G$ can be uniquely thought of as a pair $g=(t, R)$ for some $t\in \R^d$ and $R\in H$. With this in mind, the representations $\rho_{\mathcal X}$ and $\rho_{\mathcal Y}$ can be written as follows for any $f\in\mathcal Z,x\in \R^d$ and $t\in \R^d, R\in H$:
\begin{equation}
  \label{eq:induced_repr}
  \rho_{\mathcal Z}((t, R)) [f](x) = \underbrace{\pi_{\mathcal Z}(R)}_{\text{(a)}} \underbrace{f((t, R)^{-1} x)}_{\text{(b)}}\quad\text{for}\quad \mathcal Z=\mathcal X, \,\text{or}\,\mathcal Z=\mathcal Y. 
\end{equation}
These representations have a natural interpretation: to apply a group element $(t, R)$ to a vector-valued function, we must move the vectors, as in part (b) of Equation~\eqref{eq:induced_repr}, and transform each vector accordingly, as in part (a) of Equation~\eqref{eq:induced_repr}.
\subsection{Equivariant linear operators}
It is well-established that equivariant linear operators are strongly connected to the concept of convolutions. Indeed, in a relatively general setting it has been shown that an integral operator is equivariant if and only if it is given by a convolution with an appropriately constrained kernel~\cite{cohen_general_2019}. In the setting that we are considering, the more specific result in Proposition~\ref{prop:equivariant_linear} can be derived, as done in~\cite{weiler_learning_2018,weiler_general_2019} for the case $d=2$ and~\cite{weiler_3d_2018} for the case $d=3$.
\begin{prop}
  \label{prop:equivariant_linear}
  Suppose that $\Phi : \mathcal X\to \mathcal Y$ is an operator given by integration against a continuous kernel $K:\R^d\times\R^d\to \Hom(\R^{d_{\mathcal X}}, \R^{d_{\mathcal Y}})$, \[\Phi(f)(x) = \int_{\R^d} K(x, y)f(y)\,\mathrm dy.\]
  Then the operator $\Phi$ is equivariant if and only if it is in fact given by a convolution satisfying an additional constraint: there is a continuous $k:\R^d\to \Hom(\R^{d_{\mathcal X}}, \R^{d_{\mathcal Y}})$
  \[\Phi(f)(x)=\int\limits_{\R^d} k(x - y) f(y)\,\mathrm dy,\]
  where $k$ satisfies the additional condition
  \[k(Rx) = \pi_{\mathcal Y}(R) k(x) \pi_{\mathcal X}(R^{-1})\quad\text{for}\quad x\in \R^d, R\in H.\]
\end{prop}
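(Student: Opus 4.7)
The plan is to split the semi-direct product structure $G=\R^d\rtimes H$ into its two generating pieces (pure translations $(t,I)$ and pure rotations $(0,R)$) and check equivariance with respect to each. Since any $(t,R)$ factors as $(t,I)(0,R)$, equivariance for all of $G$ is equivalent to equivariance for both subgroups separately. The forward direction consists of extracting the two required conditions on $K$ from these two equivariances, and the converse is just the same computation run in reverse.

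First I would plug pure translations into the equivariance condition~\eqref{eq:general_equivariance}. Using~\eqref{eq:induced_repr}, which for $(t,I)$ reduces to $\rho_{\mathcal Z}((t,I))[f](x)=f(x-t)$, and changing variables in the integral, one gets
\[\int_{\R^d} K(x,y+t)f(y)\,\d y=\int_{\R^d} K(x-t,y)f(y)\,\d y\quad\text{for every }f\in\mathcal X.\]
Here I would invoke the standard approximation-to-the-identity argument: since $K$ is continuous, testing against a delta-approximating sequence of smooth compactly supported $f$ forces the pointwise identity $K(x,y+t)=K(x-t,y)$ for all $x,y,t\in\R^d$. Setting $y=0$ and defining $k(u):=K(u,0)$ yields $K(x,t)=k(x-t)$, so $\Phi$ is a convolution by a continuous $k\in\Hom(\R^{d_{\mathcal X}},\R^{d_{\mathcal Y}})$-valued kernel.

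Next I would substitute pure rotations $(0,R)$, for which~\eqref{eq:induced_repr} gives $\rho_{\mathcal X}((0,R))[f](x)=\pi_{\mathcal X}(R)f(R^{-1}x)$ and similarly on $\mathcal Y$. Applying $\Phi$ and using the convolution form already obtained, together with the change of variables $y\mapsto Ry$ (whose Jacobian has absolute value $1$ because $R\in\Orth(d)$), one rewrites
\[\Phi(\rho_{\mathcal X}((0,R))f)(x)=\int_{\R^d} k(x-Ry)\pi_{\mathcal X}(R)f(y)\,\d y,\]
while on the other side $\rho_{\mathcal Y}((0,R))[\Phi(f)](x)=\pi_{\mathcal Y}(R)\int_{\R^d} k(R^{-1}x-y)f(y)\,\d y$. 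The same delta-approximation argument as before turns the equality of these integrals (for all $f$) into the pointwise identity $k(x-Ry)\pi_{\mathcal X}(R)=\pi_{\mathcal Y}(R)k(R^{-1}x-y)$. Setting $y=0$ and then replacing $R$ by $R^{-1}$ (using $\pi_{\mathcal Y}(R^{-1})^{-1}=\pi_{\mathcal Y}(R)$) produces exactly the required constraint $k(Rx)=\pi_{\mathcal Y}(R)k(x)\pi_{\mathcal X}(R^{-1})$.

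The converse direction is routine: given a continuous $k$ satisfying the constraint, one defines $\Phi$ as convolution with $k$ and verifies equivariance for pure translations (by a change of variables) and for pure rotations (by running the computation above backwards, using the constraint to rewrite $\pi_{\mathcal Y}(R)k(R^{-1}x-y)$ as $k(x-Ry)\pi_{\mathcal X}(R)$). I expect the main subtlety to be the step passing from equality of the two integrals for all $f\in\mathcal X$ to pointwise equality of the continuous kernels; while this is conceptually standard, one should be careful that $L^2$-functions alone might not directly give pointwise information, so the argument genuinely uses the \emph{continuity} of $K$ (or equivalently of $k$) hypothesised in the proposition, via bump functions concentrating at a chosen point.
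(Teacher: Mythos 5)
Your proposal is correct and follows essentially the same route as the paper: both derive a pointwise kernel identity from the integral equality via delta-approximating test functions (using continuity of $K$), then specialise to pure translations to obtain the convolution form and to pure rotations to obtain the constraint $k(Rx)=\pi_{\mathcal Y}(R)k(x)\pi_{\mathcal X}(R^{-1})$, with the converse by reversing the computation. The only cosmetic difference is that you invoke the factorisation $(t,R)=(t,I)(0,R)$ up front and treat the two subgroups separately, whereas the paper first writes the condition for a general $(t,R)$ and then specialises; your explicit remark that the continuity of $K$ is what licenses the passage from integral to pointwise equality is a point the paper states only briefly.
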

The derivation of this result proceeds by writing out the definitions of equivariance and using the invariances of the Lebesgue measure. The equivariance of $\Phi$ implies that we must the following chain of equalities for any $x\in \R^d, f\in \mathcal X, t\in\R^d, R\in H$ and $g=(t, R)\in G$:
\begin{align*}
  \int\limits_{\R^d} \pi_{\mathcal Y}(R) K(g^{-1} x, y) f(y)\,\mathrm dy&\overset{\text{(a)}}{=}\pi_{\mathcal Y}(R)\int\limits_{\R^d}K(g^{-1} x, y)f(x)\,\mathrm dy\\
  &=\rho_{\mathcal Y}(g)[\Phi(f)](x) \\
  &\overset{\text{(b)}}{=} \Phi(\rho_{\mathcal X}(g)[f])(x)\\
  &= \int\limits_{\R^d} K(x, y) \rho_{\mathcal X}g[f](y)\,\mathrm dy\\
  &= \int\limits_{\R^d} K(x, y)\pi_{\mathcal X} (h)f(g^{-1}y)\,\mathrm dy\\
  &\overset{\text{(c)}}{=} \int\limits_{\R^d}K(x, gy)\pi_{\mathcal X}(h) f(y)\,\mathrm dy.
\end{align*}
Here the tags above the equality signs correspond to the following justifications:
\begin{enumerate}[label={(\alph*)}]
\item Since $\pi_{\mathcal Y}$ is a group representation, $\pi_{\mathcal Y}(R)$ is a linear map and commutes with the integral,
\item $\Phi$ is assumed to be equivariant,
\item We make the substitution $y\leftarrow g y$ and note that the Lebesgue measure is invariant to $G$.
\end{enumerate}
Taking the left hand side and right hand side together, we find that
\[\int\limits_{\R^d} \Big(\pi_{\mathcal Y}(R)K(g^{-1} x, y) - K(x,g y) \pi_{\mathcal X}(R)\Big)f(y)\,\mathrm dy = 0,\]
and since this must hold for any $f\in \mathcal X=L^2(\R^d, \R^{d_{\mathcal X}})$, we conclude by testing on sequences converging to Dirac delta functions that
\begin{equation}
  \label{eq:int_kernel_restriction}
  \pi_{\mathcal Y}(R) K(g^{-1}x, y) = K(x,g y) \pi_{\mathcal X}(R).
\end{equation}
Specialising by setting $R$ equal to the identity element, we see that
\[K(x - t, y) = K((t, I)^{-1}x, y) = K(x, (t, I) y) = K(x, y + t),\]
or upon substituting $x\leftarrow x+t$, $K(x, y) = K(x + t, y + t)$. Choosing $t$ to be the translation that takes $y$ to $0$, we find that
\[K(x, y) = K(x - y, 0) =: k(x - y)\]
defines a convolution kernel $k:\R^d\to \Hom(\R^{d_{\mathcal X}}, \R^{d_{\mathcal Y}})$. Now specialising Equation~\eqref{eq:int_kernel_restriction} by letting $R\in H$ and $x\in\R^d$ be arbitrary and $t, y=0$, we obtain the condition $\pi_{\mathcal Y}(R) k(R^{-1}x) = k(x)\pi_{\mathcal X}(R)$, or upon substituting $x\leftarrow Rx$ and rearranging,
\begin{equation}
  \label{eq:conv_kernel_restriction}
  k(Rx) = \pi_{\mathcal Y}(R) k(x) \pi_{\mathcal X}(R^{-1}).
\end{equation}
Conversely, the above reasoning can be reversed to show that the condition in Equation~\eqref{eq:conv_kernel_restriction} (for all $x\in\R^d, R\in H$) is sufficient to guarantee equivariance of $\Phi$.

The condition in Equation~\eqref{eq:conv_kernel_restriction} is a linear constraint that is fully specified before training. Hence, if a basis is computed for the convolution kernels satisfying Equation~\eqref{eq:conv_kernel_restriction}, a general equivariant linear operator can be learned by learning its parameters in that basis. Since the choices of $H$ that we consider are all compact groups, any representation of $H$ can be decomposed as a direct sum of irreducible representations of $H$ (Theorem~5.2 in~\cite{folland_course_2015}). As a result of this, we can give the following procedure to compute a basis for the convolution kernels satisfying the equivariance condition in Equation~\eqref{eq:conv_kernel_restriction} as soon as $\pi_{\mathcal X}$ and $\pi_{\mathcal Y}$ are specified:
\begin{itemize}
\item Decompose $\pi_{\mathcal X}$ and $\pi_{\mathcal Y}$ as direct sum of irreducible representations; $\pi_{\mathcal X} = Q_{\mathcal X} \diag(\pi_{\mathcal X}^1,\ldots, \pi_{\mathcal X}^{k_{\mathcal X}})Q_{\mathcal X}^{-1}, \pi_{\mathcal Y} = Q_{\mathcal Y} \diag(\pi_{\mathcal Y}^1,\ldots, \pi_{\mathcal Y}^{k_{\mathcal Y}})Q_{\mathcal Y}^{-1}$ (here $\diag$ constructs a block diagonal matrix with the diagonal elements given by the arguments supplied to $\diag$).
\item For each $i, j$ with $1\leq i\leq k_{\mathcal X}, 1\leq j \leq k_{\mathcal Y}$ find a basis for the convolution kernels $k_{i,j}$ satisfying the equivariance condition
  \[k_{i,j}(Rx)=\pi_{\mathcal Y}^j (R) k_{i,j}(x) \pi_{\mathcal X}^j(R^{-1})\]
  with the irreducible representations $\pi_{\mathcal Y}^j$ and $\pi_{\mathcal X}^i$.
\item Given expansions of the $k_{i,j}$, compute the overall equivariant convolution kernel $k$ by
  \[k = Q_{\mathcal Y}\cdot(k_{i,j})_{1\leq i\leq k_{\mathcal X}, 1\leq j\leq k_{\mathcal Y}}\cdot Q_{\mathcal X}^{-1}.\]
\end{itemize}
This procedure has been described in more detail in~\cite{weiler_general_2019} and implemented in the corresponding software package for the groups $G=\R^2\rtimes H$, where $H$ can be any subgroup of $\Orth(2)$.

Since the equivariant convolutions described above are implemented using ordinary convolutions, little extra computational effort required to use them compared to ordinary convolutions: during training, there is just an additional step of computing the basis expansion defining the equivariant convolution kernels (and backpropagating through it). When it is time to test the network, this step can be avoided by computing the basis expansion once and only saving the resulting convolution kernels, so that it is completely equivalent in terms of computational effort to using an ordinary CNN.

\subsection{Equivariant nonlinearities}
Although pointwise nonlinearities are translationally equivariant, some more care is needed when designing nonlinearities that satisfy the equivariance condition in Equation~\eqref{eq:general_equivariance} with our choices of groups. Examining the form of the induced representations in our setting, as given in Equation~\eqref{eq:induced_repr}, it is evident that for a pointwise nonlinearity $\phi:\R\to\mathcal \R$ to be equivariant (in the sense that $\phi(\rho_{\mathcal X}(g)[f]) = \rho_{\mathcal X}(g)[\phi(f)]$, with $\phi$ applied pointwise) $\phi$ must commute with $\pi_{\mathcal X}(R)$ for every $R\in H$: with $g = (t,R)$ for $t\in\R^d, R\in H$ we have
\[\phi(\pi_{\mathcal X}(R) f(g^{-1}x))=\phi(\rho_{\mathcal X}(g)[f])(x) = \rho_{\mathcal X}(g)[\phi(f)](x)=\pi_{\mathcal X}(h)\phi(f(g^{-1}x)).\]
This can be ensured if $\pi_{\mathcal X}$ is the regular representation of $H$, since in that case each $\pi_{\mathcal X}(h)$ is a permutation matrix, giving the following guideline:
\begin{lemma}
  Suppose that $G=\R^d\rtimes H$ with $H$ a finite subgroup of $\Orth(d)$ and that $\phi:\R\to\R$ is a given function. If $\pi_{\mathcal X}$ is the regular representation of $H$, then $\Phi:\mathcal X\to\mathcal X$ is equivariant, where $\Phi(f)(x) = \phi(f(x))$.
\end{lemma}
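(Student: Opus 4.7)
The plan is to unfold the equivariance condition from Equation~\eqref{eq:general_equivariance} using the explicit form of the induced representation given in Equation~\eqref{eq:induced_repr}, and reduce the problem to a purely finite-dimensional statement about how componentwise application of $\phi$ interacts with permutation matrices.

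First I would fix $g = (t, R) \in G$ and $f \in \mathcal X$ and evaluate both sides of the desired identity $\Phi(\rho_{\mathcal X}(g)[f]) = \rho_{\mathcal X}(g)[\Phi(f)]$ at an arbitrary point $x \in \R^d$. Using Equation~\eqref{eq:induced_repr}, the left-hand side becomes $\phi\bigl(\pi_{\mathcal X}(R)\, f(g^{-1} x)\bigr)$, where $\phi$ is applied componentwise to the resulting vector in $\R^{d_{\mathcal X}}$, while the right-hand side becomes $\pi_{\mathcal X}(R)\,\phi\bigl(f(g^{-1} x)\bigr)$. Writing $v = f(g^{-1} x) \in \R^{d_{\mathcal X}}$, the identity reduces to the finite-dimensional claim that $\phi(\pi_{\mathcal X}(R)\, v) = \pi_{\mathcal X}(R)\, \phi(v)$ for every $R \in H$ and every $v \in \R^{d_{\mathcal X}}$.

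Next I would invoke the hypothesis that $\pi_{\mathcal X}$ is the regular representation of $H$: by definition, every $\pi_{\mathcal X}(R)$ is a permutation matrix, so there exists a permutation $\sigma_R$ of $\{1,\ldots,|H|\}$ with $(\pi_{\mathcal X}(R)\, v)_i = v_{\sigma_R(i)}$. Since $\phi$ acts independently on each coordinate, reordering the coordinates of $v$ and then applying $\phi$ coincides with applying $\phi$ to $v$ and then reordering. Explicitly, $(\phi(\pi_{\mathcal X}(R)\, v))_i = \phi(v_{\sigma_R(i)}) = (\phi(v))_{\sigma_R(i)} = (\pi_{\mathcal X}(R)\, \phi(v))_i$, which proves the reduced identity and hence the equivariance of $\Phi$.

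There is no real obstacle here; the statement is an immediate consequence of the fact that the regular representation acts by coordinate permutations together with the trivial observation that componentwise nonlinearities commute with such permutations. The only points requiring any care are the bookkeeping between the codomain representation $\pi_{\mathcal X}$ and the induced representation $\rho_{\mathcal X}$ on $\mathcal X$, and noting that the translation part of $g$ only transports the argument of $f$ and never interacts with $\phi$, so that the translation component trivially drops out of the verification.
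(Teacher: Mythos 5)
Your argument is correct and is essentially the paper's own proof: both reduce the equivariance of $\Phi$ to the finite-dimensional statement that componentwise application of $\phi$ commutes with $\pi_{\mathcal X}(R)$, and then observe that for the regular representation each $\pi_{\mathcal X}(R)$ is a permutation matrix, with which pointwise nonlinearities trivially commute. Your version merely makes the coordinate-permutation bookkeeping explicit, which the paper leaves implicit.
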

Another way to ensure that $\phi$ commutes with $\pi_{\mathcal X}$ is by choosing the trivial representation. Although the trivial representation may not be very interesting by itself, this gives rise to another form of nonlinearity called the norm nonlinearity. If $\pi_{\mathcal X}$ is a unitary representation, taking the pointwise norm satisfies an equivariance condition: with $g=(t,R)$ for $t\in \R^d, R\in H$
\[\|\rho_{\mathcal X}(g)[f](x)\| = \|\pi_{\mathcal X}(R) f(g^{-1}x))\| = \|f(g^{-1}x)\|.\]
The right-hand side transforms according to the trivial representation, so by the above comments we deduce that the nonlinearity $f \mapsto \phi(\|f\|)$ satisfies an equivariance condition of the same form. To obtain the norm nonlinearity, which maps features of a given type to features of the same type, we then form the map $\Phi:\mathcal X\to\mathcal X, f\mapsto f\cdot \phi(\|f\|)$: with $g=(t, R)$ for $t\in\R^d, R\in H$, we have
\begin{align*}
  \Phi(\rho_{\mathcal X}(g)[f])(x) &= \pi_{\mathcal X}(R)f(g^{-1}x)\cdot\phi( \|f(g^{-1}x)\|)\\
  &=\pi_{\mathcal X}(R) \Big(f(g^{-1}x)\cdot \phi(\|f(g^{-1}x)\|)\Big)\\
  &=\pi_{\mathcal X}(R) \Big(f\cdot \phi(\|f\|)\Big)(g^{-1}x)\\
  &= \rho_{\mathcal X}(g) [\Phi(f)](x),
\end{align*}
where we used that $\phi(\|f(g^{-1}x)\|)$ is a scalar. This shows that the norm nonlinearity $\Phi$ is indeed equivariant:
\begin{lemma}
  Suppose that $\pi_{\mathcal X}$ is a unitary representation of $H$, and that $\phi:\R\to\R$ is a given function. Then the norm nonlinearity $\Phi:\mathcal X\to \mathcal X$ with $\Phi(f)[x] = f(x)\phi(\|f(x)\|)$ is equivariant.
\end{lemma}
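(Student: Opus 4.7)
The statement is essentially the formal conclusion of the chain of equalities written just above the lemma, so my plan is to recast that computation as a clean, self-contained proof. Concretely, I would fix $g = (t, R)\in G$ with $t\in\R^d, R\in H$, take arbitrary $f\in\mathcal X$ and $x\in\R^d$, and verify $\Phi(\rho_{\mathcal X}(g)[f])(x) = \rho_{\mathcal X}(g)[\Phi(f)](x)$ by direct computation.

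First I would unpack the left-hand side using the definition of $\Phi$ and Equation~\eqref{eq:induced_repr}, writing
\[\Phi(\rho_{\mathcal X}(g)[f])(x) = \rho_{\mathcal X}(g)[f](x)\cdot \phi\bigl(\|\rho_{\mathcal X}(g)[f](x)\|\bigr) = \pi_{\mathcal X}(R) f(g^{-1}x)\cdot \phi\bigl(\|\pi_{\mathcal X}(R) f(g^{-1}x)\|\bigr).\]
The crucial step is to invoke the hypothesis that $\pi_{\mathcal X}$ is a unitary representation: this gives $\|\pi_{\mathcal X}(R) v\| = \|v\|$ for any $v\in\R^{d_{\mathcal X}}$, and hence the scalar inside $\phi$ simplifies to $\|f(g^{-1}x)\|$. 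This is the one non-trivial input and, arguably, the only potential obstacle; otherwise everything is formal manipulation.

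Next I would use that $\phi(\|f(g^{-1}x)\|)$ is a real scalar, so it commutes with the linear map $\pi_{\mathcal X}(R)$, allowing me to pull $\pi_{\mathcal X}(R)$ out:
\[\pi_{\mathcal X}(R) f(g^{-1}x)\cdot \phi(\|f(g^{-1}x)\|) = \pi_{\mathcal X}(R)\bigl[f(g^{-1}x)\cdot \phi(\|f(g^{-1}x)\|)\bigr] = \pi_{\mathcal X}(R) [\Phi(f)](g^{-1}x).\]
Finally, applying Equation~\eqref{eq:induced_repr} in reverse identifies the right-hand side with $\rho_{\mathcal X}(g)[\Phi(f)](x)$, completing the verification. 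Since $g$, $f$, and $x$ were arbitrary, this establishes the equivariance $\Phi\circ \rho_{\mathcal X}(g) = \rho_{\mathcal X}(g)\circ \Phi$ for all $g\in G$, which is exactly the claim.
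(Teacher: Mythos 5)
Your proposal is correct and follows the same route as the paper: unpack $\rho_{\mathcal X}(g)$ via Equation~\eqref{eq:induced_repr}, use unitarity of $\pi_{\mathcal X}$ to reduce $\|\pi_{\mathcal X}(R) f(g^{-1}x)\|$ to $\|f(g^{-1}x)\|$, and commute the scalar $\phi(\|f(g^{-1}x)\|)$ past the linear map $\pi_{\mathcal X}(R)$. Nothing is missing; your write-up makes the role of unitarity slightly more explicit than the paper's inline computation.
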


%%% Local Variables:
%%% mode: latex
%%% TeX-master: "main"
%%% End:

\section{Reconstruction methods motivated by variational regularisation}
\label{sec:methods_var_reg}
We consider the inverse problem of estimating an image $u$ from noisy measurements $y$. We will assume that knowledge of the measurement process is available in the form of the forward operator $A$, which maps an image to ideal, noiseless measurements, and generally there were will be a reasonable idea of the process by which they are corrupted to give rise to the noisy measurements $y$. A tried and tested approach to solving inverse problems is the variational regularisation approach~\cite{engl_regularization_1996,burger_convergence_2004}. In this approach, images are recovered from measurements by minimising a trade-off between the data fit and a penalty function encoding prior knowledge:
\begin{equation}\hat u = \argmin_u E(u) + J(u),\label{eq:variational_regularisation}\end{equation}
with $E$ a data discrepancy functional penalising mismatch of the estimated image and the measurements and $J$ the penalty function. Usually $E$ will take the form $E(u) = d(A(u), y)$, where $d$ is a measure of divergence chosen based on our knowledge of the noise process.
\subsection{Equivariance in splitting methods}
Generally, Problem~\eqref{eq:variational_regularisation} may be difficult to solve, and a lot of research has been done on methods to solve problems such as these. Iterative methods to solve it are often structured as splitting methods: the objective function is split into terms, and easier subproblems associated with each of these terms are solved in an alternating fashion to yield a solution to Problem~\eqref{eq:variational_regularisation} in the limit. A prototypical example of this is the proximal gradient method (also known as forward-backward splitting)~\cite{bruck_weak_1977,passty_ergodic_1979}, which has become a standard tool for solving linear inverse problems, particularly in the form of the FISTA algorithm~\cite{beck_fast_2009}. In its basic form, the proximal gradient method performs the procedure described in Algorithm~\ref{alg:fb}.
\begin{algorithm}
  \caption{Proximal gradient method}
  \label{alg:fb}
  \begin{algorithmic}
    \State $u\leftarrow u^0$
    \For{$i\leftarrow 1,\ldots, \texttt{it}$}
    \State $u\leftarrow \prox_{\tau^i J}(u - \tau ^i \nabla_u E(u))$
    \EndFor\\
    \Return $u$
  \end{algorithmic}
  \end{algorithm}

  Recall here that the proximal operator~\cite{moreau_fonctions_1962,moreau_proprietes_1963, moreau_proximite_1965} $\prox_{J}$ is defined as follows:
  \begin{definition}
    \label{def:prox_definition}
    Suppose that $\mathcal X$ is a Hilbert space and that $J:\mathcal X\to \R\cup \{+\infty\}$ is a lower semi-continuous convex proper functional. The proximal operator $\prox_J:\mathcal X\to \mathcal X$ is then defined as
    \begin{equation}
      \label{eq:prox_definition}
      \prox_J(u) = \argmin_{u'\in\mathcal X} \frac{1}{2}\|u - u'\|^2 + J(u')
    \end{equation}
  \end{definition}

  Although this definition of proximal operators assumes that the functional $J$ is convex, this assumption is more stringent than is necessary to ensure that an operator defined by Equation~\eqref{eq:prox_definition} is well-defined and single-valued. One can point for example to the classes of $\mu$-semi-convex functionals (i.e. the set of $J$, such that $u\mapsto J(u) +\frac{\mu}{2}\|u\|^2$ is convex) on $\mathcal X$ for $0<\mu < 1$, which include nonconvex functionals. In what follows, we will allow for such more general functionals by just asking that the proximal operator is well-defined and single-valued.
  
    It is often reasonable to ask that the proximal operators $\prox_{\tau J}$ satisfy an equivariance property; if the corresponding regularisation functional $J$ is invariant to a group symmetry, the proximal operator will be equivariant:
    \begin{prop}
      \label{prop:invariant_regulariser_implies_equivariant_prox}
      Suppose that $\mathcal X$ is a Hilbert space and $\rho$ is a unitary representation of a group $G$ on $\mathcal X$. If a functional $J:\mathcal X\to \R\cup \{+\infty\}$ is invariant, i.e.~$J(\rho(g) f) = J(f)$, and has a well-defined single-valued proximal operator $\prox_J:\mathcal X\to\mathcal X$, then $\prox_J$ is equivariant, in the sense that
      \[\prox_J(\rho(g) f) = \rho(g)\prox_J(f)\]
      for all $f\in \mathcal X$ and $g\in G$. 
    \end{prop}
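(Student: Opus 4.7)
The plan is a direct change-of-variables argument in the defining optimisation problem for the proximal operator, exploiting that $\rho(g)$ is a norm-preserving bijection of $\mathcal X$ and that $J$ is $\rho$-invariant.

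First, I would unfold the definition: by Definition~\ref{def:prox_definition},
\[
\prox_J(\rho(g) f) = \argmin_{u'\in\mathcal X} \tfrac12 \|\rho(g) f - u'\|^2 + J(u').
\]
Since $\rho$ is a group representation, $\rho(g)$ is invertible on $\mathcal X$ with inverse $\rho(g^{-1})$, so the substitution $u' = \rho(g) v$ is a bijection of $\mathcal X$ onto itself. Rewriting the objective under this substitution gives
\[
\tfrac12 \|\rho(g) f - \rho(g) v\|^2 + J(\rho(g) v) = \tfrac12 \|\rho(g)(f - v)\|^2 + J(\rho(g) v).
\]

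Next I would apply the two hypotheses in turn. Because $\rho$ is unitary, $\|\rho(g)(f - v)\| = \|f - v\|$, and because $J$ is $G$-invariant, $J(\rho(g) v) = J(v)$. Therefore the substituted objective equals $\tfrac12\|f - v\|^2 + J(v)$, which is precisely the objective defining $\prox_J(f)$. Since the substitution is a bijection and the proximal operators on both sides are assumed to be well-defined and single-valued, the minimiser in $v$ is unique and equals $\prox_J(f)$; correspondingly the unique minimiser in $u'$ is $\rho(g)\prox_J(f)$. Combining these yields $\prox_J(\rho(g) f) = \rho(g) \prox_J(f)$, as required.

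There is no real obstacle here: the only subtlety is justifying that the $\argmin$ commutes with $\rho(g)$, which is exactly where the single-valuedness assumption from Definition~\ref{def:prox_definition} is used. If one preferred to avoid that assumption, the same argument would show the set-valued equivariance $\prox_J(\rho(g)f) = \rho(g)\prox_J(f)$ as subsets of $\mathcal X$, since $\rho(g)$ is a bijection on $\mathcal X$ mapping the solution set of one problem onto that of the other.
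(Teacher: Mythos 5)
Your proof is correct and follows essentially the same route as the paper's: a change of variables $u'=\rho(g)v$ in the $\argmin$ defining $\prox_J(\rho(g)f)$, using unitarity of $\rho(g)$ to simplify the quadratic term and invariance of $J$ to simplify the penalty, then transporting the minimiser back by $\rho(g)$. The closing remark about the set-valued version is a nice touch but not needed under the stated hypotheses.
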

    \begin{proof}
      We have the following chain of equalities:
\begin{align*}
  \prox_J(\rho(g)f) &= \argmin_{h} \frac{1}{2} \|\rho(g)f - h \|^2 + J(h)\\
                   &\overset{\text{(a)}}{=} \argmin_h \frac{1}{2} \|\rho(g) (f - \rho(g^{-1}) h)\|^2 + J(\rho(g^{-1}) h)\\
                   &\overset{\text{(b)}}{=} \argmin_h \frac{1}{2} \|f - \rho(g^{-1}) h\|^2 + J(\rho(g^{-1})h)\\
  &\overset{\text{(c)}}{=} \rho(g) [\argmin_h \frac{1}{2} \|f - h\|^2 + J(h)] = \rho(g) \prox_J(f).
\end{align*}
The three marked steps are justified as follows:
\begin{enumerate}[label={(\alph*)}]
\item $J$ is assumed to be invariant w.r.t.~$\rho$,
\item The representation $\rho$ is assumed to be unitary,
\item $\rho(g)$ is invertible, and under the substitution $h\leftarrow \rho(g) h$, the minimiser transforms accordingly.
\end{enumerate}
\end{proof}
\begin{example}
As a prominent example of a regularisation functional satisfying the conditions of Proposition~\ref{prop:invariant_regulariser_implies_equivariant_prox}, consider the total variation functional~\cite{rudin_nonlinear_1992} on $L^2(\R^d)$
\[\TV(u) = \sup_{\phi\in C_c^\infty(\R^d;\R^d), \| \phi\|_\infty\leq 1}\int\limits_{\R^d} u \divergence \phi, \]
with the group $G=\SE(d)$ and the scalar field representation $\rho(r)[f](x) = f(r^{-1}x)$. Since the Lebesgue measure is invariant to $G$ and the set of vector fields $\{\phi \in C^\infty_c(\R^d;\R^d) | \|\phi\|_\infty\leq 1\}$ is closed under $G$, $\TV$ is invariant w.r.t.~$\rho$. As a result of this, Proposition~\ref{prop:invariant_regulariser_implies_equivariant_prox} tells us that $\prox_{\tau \TV}$ is equivariant w.r.t.~$\rho$ for any $\tau \geq 0$. Note that $\TV$ is not unique in satisfying these conditions; by a similar argument it can be shown, for example, that the higher order total generalised variation functionals~\cite{bredies_total_2010} share the same invariance property (and hence also that their proximal operators are equivariant).
\end{example}
\begin{remark}
  \label{remark:vector_valued}
The above example, and all other examples that we consider in this work, are concerned with the case where the image to be recovered is a scalar field. Note, however, that Proposition~\ref{prop:invariant_regulariser_implies_equivariant_prox} is not limited to this type of field and that there are applications where it is natural to use more complicated representations $\rho$. A notable example is diffusion tensor MRI~\cite{coulon_diffusion_2004} in which case the image to be estimated is a diffusion tensor field and $\rho$ should be chosen as the appropriate tensor representation.
\end{remark}
\subsubsection{Equivariance of the reconstruction operator}
It is worth thinking about whether it is sensible to ask that the overall reconstruction method is equivariant, and how this should be interpreted. Thinking of the reconstruction operator as a map from measurements $y$ to images $\hat u$, it is hard to make sense of the statement that it is equivariant, since the measurement space generally does not share the symmetries of the image space (in the case where measurements may be incomplete). If we think instead of the reconstruction method as mapping a true image $u$ to an estimated image $\hat u$ through (noiseless) measurements $y= A(u)$, we might ask that a symmetry transformation of $u$ should correspond to the same symmetry transformation of $\hat u$. In the case of reconstruction by a variational regularisation method as in Problem~\eqref{eq:variational_regularisation}, this is too much to ask for even if the regularisation functional is invariant, since information in the (incomplete) measurements can appear or disappear under symmetry transformations of the true image. An example of this phenomenon when solving an inpainting problem is shown in Figure~\ref{fig:var_reg_is_not_equivariant}.
\begin{figure}[!ht]
    \centering
    \includegraphics[scale=1]{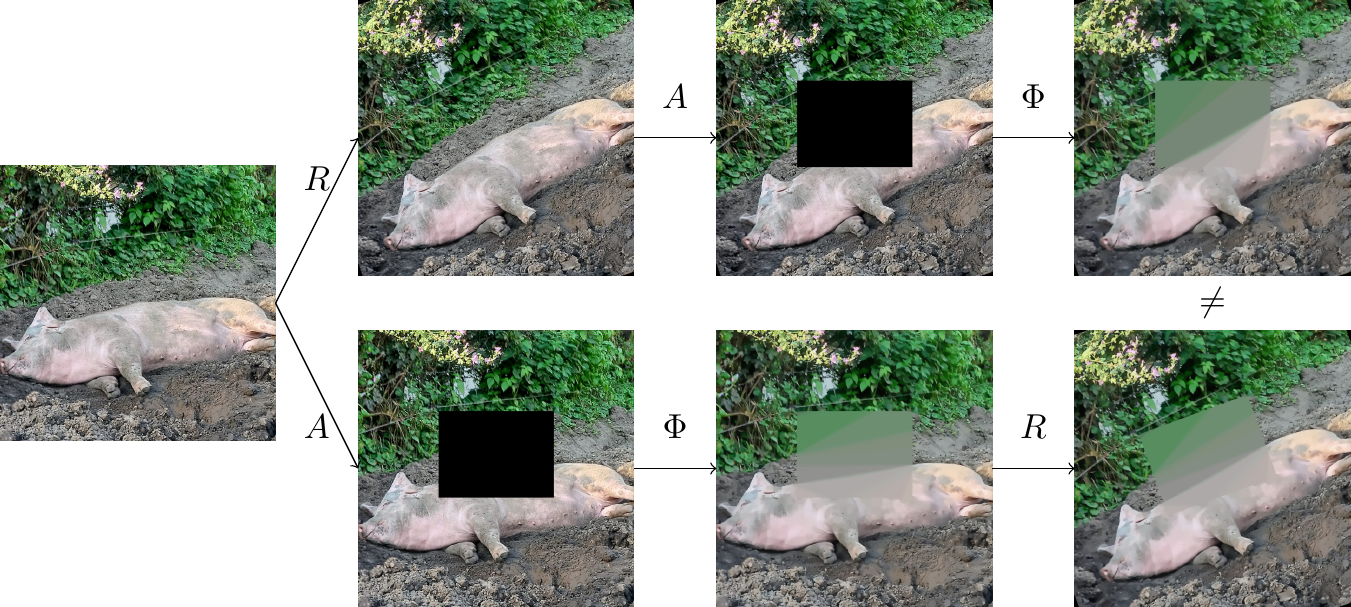}
    \caption{An example demonstrating the non-equivariance of a general variational regularisation approach to image reconstruction, even when the corresponding regularisation functional $J$ (as in Problem~\eqref{eq:variational_regularisation}) is invariant. Here, $A$ represents the application of an inpainting mask, $R$ is an operator rotating the image by $20^\circ$ and $\Phi$ is the solution map to Problem~\eqref{eq:variational_regularisation} with $E(u) = \|Au - y\|_2^2$ and $J(u) = \tau\TV(u)$.}
    \label{fig:var_reg_is_not_equivariant}
\end{figure}
\subsection{Learned proximal gradient descent}
A natural way to use knowledge of the forward model in a neural network approach to image reconstruction is in the form of unrolled iterative methods~\cite{adler_solving_2017,putzky_recurrent_2017}. Starting from an iterative method to solve Problem~\eqref{eq:variational_regularisation}, the method is truncated to a fixed number of iterations and some of the steps in the truncated algorithm are replaced by learnable parts. As noted in the previous section, the proximal gradient method in Algorithm~\ref{alg:fb} can be applied to a variational regularisation problem such as Problem~\eqref{eq:variational_regularisation}. Motivated by this and the unrolled iterative method approach, we can study learned proximal gradient descent as in Algorithm~\ref{alg:lfb} (where the variable $s$ can be used as a memory state as is common in accelerated versions of the proximal gradient method~\cite{beck_fast_2009}):
    \begin{algorithm}
      \caption{Learned proximal gradient method}
      \label{alg:lfb}
      \begin{algorithmic}
        \State $u \leftarrow u^0, s\leftarrow 0$
        \For{$i\leftarrow 1,\ldots, \texttt{it}$}
        \State $(u, s) \leftarrow \widehat\prox_i(u, s, \nabla E(u))$
        \EndFor\\
        \Return $\Phi(y):=u$
      \end{algorithmic}
    \end{algorithm}
    
Here $\widehat \prox_i$ are neural networks, the architectures of which are chosen to model proximal operators. In this work, we choose $\widehat \prox_i$ to be defined as
\begin{equation}
    \widehat \prox_i = K_{\text{project}, i}\circ (\id + \phi\circ K_{\text{intermediate}, i}) \circ K_{\text{lift}, i}, \label{eq:define_prox_model}
\end{equation}
where each of the $K_{\text{project}, i}, K_{\text{intermediate}, i}$ and $K_{\text{lift}, i}$ are learnable affine operators and $\phi$ is an appropriate nonlinear function. We can appeal to Proposition~\ref{prop:invariant_regulariser_implies_equivariant_prox} and model $\widehat\prox_i$ as translationally equivariant (we will call the corresponding reconstruction method the ordinary method in what follows) or as roto-translationally equivariant (we will call the corresponding reconstruction method the equivariant method in what follows). 

\begin{figure}[!htb]
  \centering
  \includegraphics[scale=1]{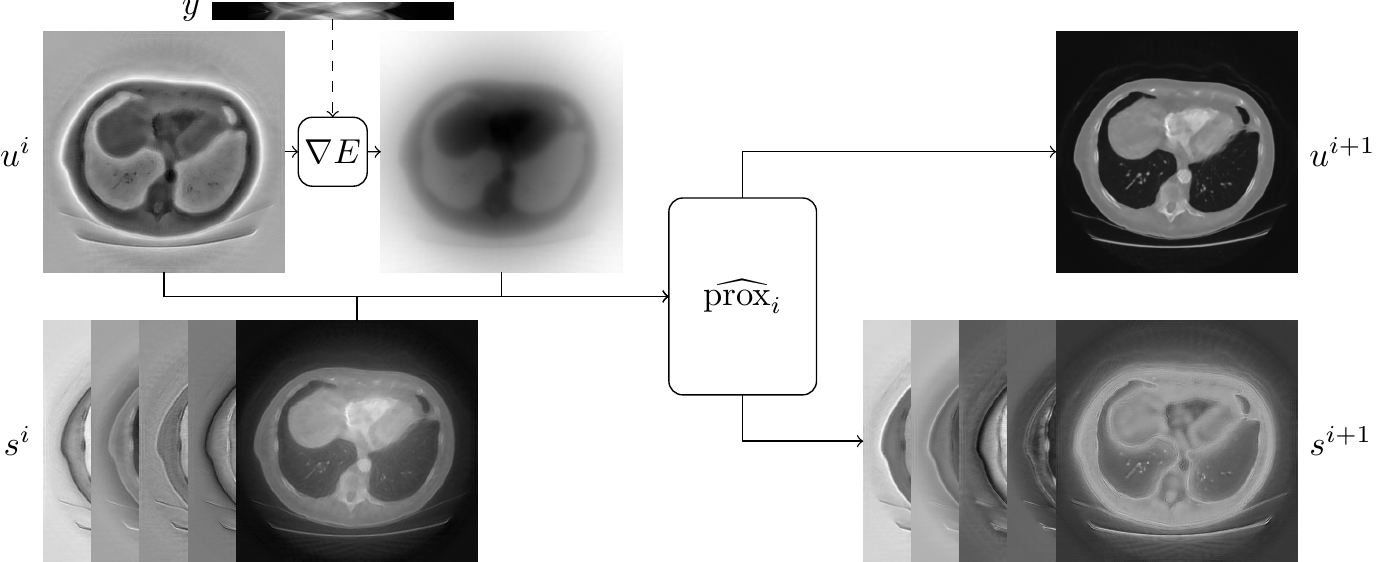}
  \caption{A schematic illustration of a single iteration of the learned proximal gradient method, Algorithm~\ref{alg:lfb}, for a CT reconstruction problem. The choice of $E$ is described in Section~\ref{sec:lidc}. Knowledge of the forward model is incorporated into the reconstruction through $\nabla E$, which is not an equivariant operator in general. Motivated by Proposition~\ref{prop:invariant_regulariser_implies_equivariant_prox}, we know that $\widehat{\prox_i}$ is naturally modelled as an equivariant operator.
  }
  \label{fig:prox_block}
\end{figure}

Recall that we consider groups of the form $G=\R^d\rtimes H$ for subgroups $H$ of $\Orth(d)$ in this work. Since we apply the learned equivariant method to reconstruct scalar-valued images, the input and output types of each $\widehat{\prox}_i$ should correspond to features carrying the trivial representation of $H$. For the equivariant method, $K_{\text{lift}, i}$ are equivariant convolutions from a small number of input channels with the trivial representation of $H$ to a larger number of intermediate channels with the regular representation of $H$, if $H$ is a finite group, or various irreducible representations of $H$, if $H$ is a continuous group. $K_{\text{intermediate}, i}$ are chosen as equivariant convolutions mapping the output channels of $K_{\text{lift}, i}$ to a set of channels of the same type. Finally, $K_{\text{project}, i}$ are chosen as equivariant convolutions that map the output channels of $K_{\text{intermediate}, i}$ to a small number of output channels with the trivial representation of $H$.

For the ordinary method, $K_{\text{lift}, i}$ are ordinary convolutions mapping a small number of input channels to a larger number of intermediate channels, $K_{\text{intermediate}, i}$ are ordinary convolutions mapping the output channels of $K_{\text{lift}, i}$ to a set of channels of the same type, and $K_{\text{project}, i}$ are ordinary convolutions mapping the many output channels of $K_{\text{intermediate}, i}$ to a small number of output channels.

Since the implementations of the equivariant convolutions are ultimately based on ordinary convolutions, a natural comparison can be made between the equivariant and ordinary method by matching the widths of the underlying ordinary convolutions. When the methods are compared in this way, they should take comparable computational effort to use and the ordinary method is a superset of the equivariant method in the sense that the parameters of the ordinary method can be chosen to reproduce the action of the equivariant method.

\begin{remark}
Both in the case of Algorithm~\ref{alg:fb} and Algorithm~\ref{alg:lfb}, we require access to the gradient $\nabla E$, where $E$ is a data discrepancy functional. In our case, $E$ always takes the form $E(u) = d(A(u), y)$ where $A$ is the forward operator and $d$ is a measure of divergence. As a result of this $E$ can be differentiated by the chain rule as long as we have access to the gradient of $d$ and can compute vector-Jacobian products of $A$. If the forward operator $A$ is linear, its vector-Jacobian products are just given by the action of the adjoint of $A$.  
\end{remark}

\section{Experiments}
\label{sec:experiments}
In this section, we demonstrate that roto-translationally equivariant operations can be incorporated into a learned iterative reconstruction method such as Algorithm~\ref{alg:lfb} to obtain higher quality reconstructions than those obtained using comparable reconstruction methods that only use translationally equivariant operations. We consider two different inverse problems: a subsampled MRI problem and a low-dose CT problem. The code that was used to produce the experimental results shown is freely available at \url{https://github.com/fsherry/equivariant\_image\_recon}.
\subsection{Datasets}
\label{sec:datasets}
  \subsubsection{LIDC-IDRI dataset}
  \label{sec:lidc}
We use a selection of chest CT images of size $512\times 512$ from the LIDC-IDRI dataset~\cite{armato_iii_lung_2011,armato_iii_data_2015} for our CT experiments. As in Section~\ref{sec:fastmri}, we screen the images to remove as many low-quality images as possible, The set is split into 5000 images that can be used for training, 200 images that can be used for validation and 1000 images that can be used for testing. For the experiments using this dataset, we use the ASTRA toolbox~\cite{palenstijn_performance_2011,van_aarle_astra_2015,aarle_fast_2016} to simulate a parallel beam ray transform $\mathcal R$ with 50 uniformly spaced views at angles between $0$ and $\pi$. We simulate the measurements $y$ as post-log data in a low-dose setting:
\[y = -\frac{1}{\mu}\log\Big(\max\Big\{\frac{n}{N_\text{in}}, \eta\Big\}\Big),\quad\text{where}\quad n\sim\Pois(N_\text{in} \exp(-\mu \mathcal R(u))).\]
Here $N_\text{in}=10000$ is the average number of photons per detector pixel (without attenuation), $\mu$ is a base attenuation coefficient connecting the volume geometry and attenuation strength, and $\eta$ is a small constant to ensure that the argument of the logarithm is strictly positive, chosen as $\eta=10^{-8}$ in our experiments. In these experiments, we will define the data discrepancy functional $E$ as
\[E(u) = \frac{1}{2}\|\mathcal R u - y\|_2^2.\]
\begin{figure}[!htb]
  \centering
  \includegraphics[scale=1]{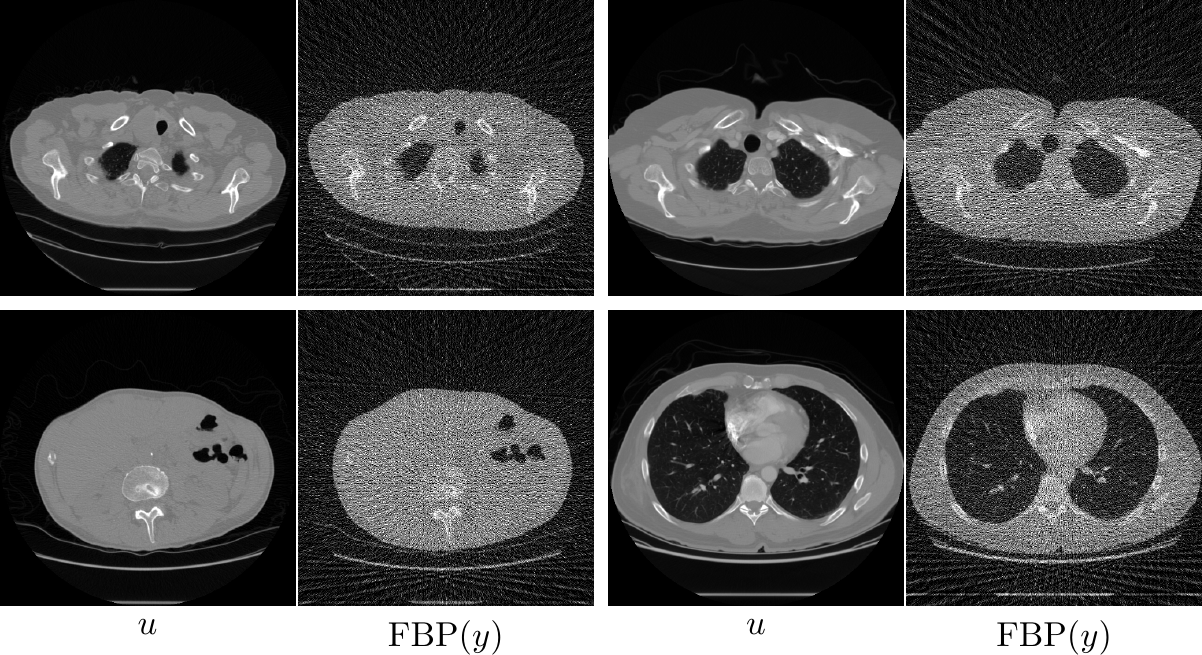}
  \caption{Four samples of the images that were used to train the reconstruction operators in the CT experiments, and the results of applying filtered backprojection (FBP) to the corresponding simulated sinograms. The images are clipped between upper and lower attenuation coefficient limits of $-1024$~HU and $1023$~HU.}
  \label{fig:ct_samples}
\end{figure}
\subsubsection{FastMRI}
\label{sec:fastmri}
  We use a selection of axial T1-weighted brain images of size $320\times 320$ from the FastMRI dataset~\cite{knoll_fastmri_2020,zbontar_fastmri_2019} for our MRI experiments. We use a combination of $L^1$ norm and the $\TV$ functional as a simple way to screen out low-quality images. The details of this procedure can be found in the code repository associated with this work. The set is split into 5000 images that can be used for training, 200 images that can be used for validation and 1000 images that can be used for testing. For the experiments using this dataset, we simulate the measurements using a discrete Fourier transform $\mathcal F$ and a variable density Cartesian line sampling pattern $\mathcal S$ (simulated using the software package associated with the work in~\cite{lustig_sparse_2007} and shown in Figure~\ref{fig:mri_samples}):
  \[y = \mathcal S\mathcal F u + \eps,\]
  where $\eps$ is complex-valued white Gaussian noise. In this setting, a complex-valued image is modeled as a real image with two channels, one for the real part and the other for the imaginary part. The corresponding data discrepancy functional ($E$ in Equation~\eqref{eq:variational_regularisation}) will be defined as
  \[E(u) = \frac{1}{2}\|\mathcal S\mathcal F u - y\|_2^2.\]
  \begin{figure}[!htb]
    \centering
    \includegraphics[scale=1]{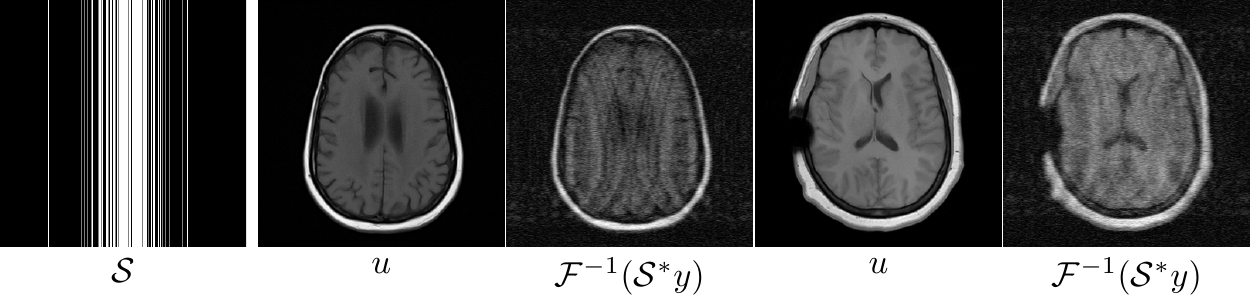}
    \caption{The sampling mask $\mathcal S$ used in the MRI experiments, sampling $20.3\%$ of k-space, and two samples of the images that were used to train the reconstruction operators in the MRI experiments, and the zero-filling reconstructions from the corresponding simulated k-space measurements.}
    \label{fig:mri_samples}
  \end{figure}
\subsection{Experimental setup}
\subsubsection{Learning framework}
Although it is also possible to learn the parameters of the reconstruction methods in Algorithm~\ref{alg:lfb} in an unsupervised learning setting, all experiments that we consider in this work can be classified as supervised learning experiments: given a finite training set $\{(u_i, y_i)\}_{i=1}^N$ of ground truth images $u_i$ and corresponding noisy measurements $y_i$, we choose the parameters of $\Phi$ in Algorithm~\ref{alg:lfb} by solving the empirical risk minimisation problem
\[\min_\Phi \frac{1}{N} \sum\limits_{i=1}^N \| u_i - \Phi(y_i)\|_2^2.\]
\subsubsection{Architectures and initialisations of the reconstruction networks}
To ensure fair comparisons between the various methods that we compare, we fix as many as possible of the aspects of the methods that are orthogonal to the point investigated in the experiments. To this end, every learned proximal gradient method has a depth of $\texttt{it}=8$ iterations. Both for the CT and MRI experiment, the images being recovered are two-dimensional, so we use equivariant convolutions with respect to groups of the form $\R^2\rtimes \Z_m$. Since the equivariant convolutions are implemented using ordinary convolutions, it is natural and straightforward to compare methods with the same width. The width of each network is the same (feature vectors that transform according to the regular representation take up $|H|$ ``ordinary'' channels, and we fix the size of the product $|H|\cdot n_\text{channels}=96$ where $n_\text{channels}$ is the number of such feature vectors in the intermediate part of $\widehat{\prox}_i$ in Equation~\eqref{eq:define_prox_model}). All convolution filters used are of size $3\times 3$. We choose the initial reconstruction $u^0=0$ and use a memory variable $s$ of five scalar channels wide in the learned proximal gradient method (Algorithm~\ref{alg:lfb}).

Furthermore we ensure that the initialisation of both types of methods are comparable. Referring back to Equation~\eqref{eq:define_prox_model}, we choose to initialise $K_{\text{intermediate}, i}$ equal to zero and let $K_{\text{project}, i}$ and $K_{\text{lift}, i}$ be randomly initialised using the He initialisation method~\cite{he_delving_2015}, as implemented in PyTorch~\cite{paszke_pytorch_2019} for ordinary convolutions and generalised to equivariant convolutions in~\cite{weiler_learning_2018} and implemented in the software package \url{https://github.com/QUVA-Lab/e2cnn}~\cite{weiler_general_2019}.
\subsubsection{Hyperparameters of the equivariant methods}
\label{sec:hyperparameters}
In addition to the usual parameters of a convolutional neural network, the learned equivariant reconstruction methods have additional parameters related to the choice of the symmetry group its representations to use. In this work, we have chosen to work with groups of the form $\R^2\rtimes \Z_m$, so a choice needs to be made which $m\in\N$ to consider.

In Figure~\ref{fig:vary_group_order}, we see the result of training and validating learned equivariant reconstruction methods on the CT reconstruction problem, with various orders $m$ of the group $H=\Z_m$. Each of the learned methods is trained on the same training set consisting of 100 images. The violin plots used give kernel density estimates of the distributions of the performance measures; for each one, we have omitted the top and bottom 5\% of values so as not to be misled by outliers. Evidently, in this case, the groups of on-grid rotations significantly outperform the other choices, with $m=4$ giving the best performance. Based on this result, all further experiments with the equivariant methods will use the group $H=\Z_4$.
\begin{figure}[!htb]
  \centering
  \includegraphics[scale=1]{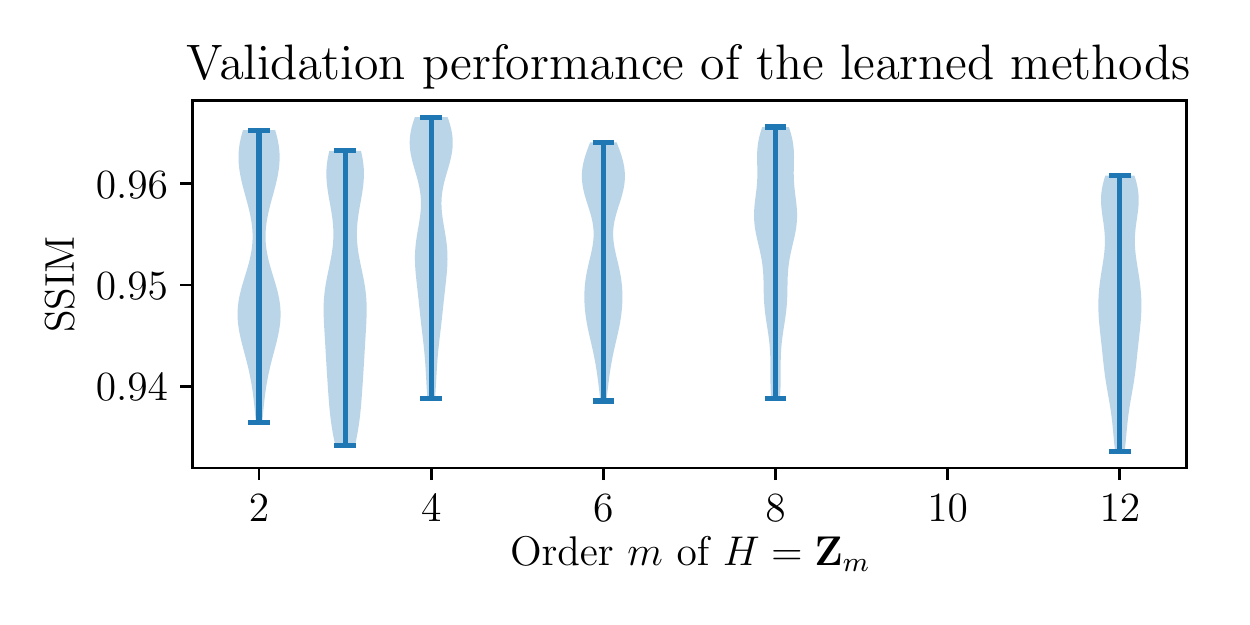}
  \caption{The reconstruction quality, as measured on a validation set, of learned proximal gradient methods trained on the CT reconstruction problem with varying orders of the group $H$. Note that when $H$ is chosen to represent on-grid rotations (i.e.~$m=2$ or $m=4$), the performance is significantly better than for any of the other choices of $H$.}
  \label{fig:vary_group_order}
\end{figure}
\subsubsection{Training details}
For both the equivariant and ordinary reconstruction methods, we train the methods using the Adam optimisation algorithm~\cite{kingma_adam_2017} with learning rate $10^{-4}$, $\beta_1=0.9, \beta_2=0.999$ and $\eps=10^{-8}$. We use minibatches of size 1 and perform a total of $10^5$ iterations of the Adam algorithm to train each method. Since we have chosen to use the finite group approach, with intermediate fields transforming according to their regular representation, we can use a pointwise nonlinearity for both the equivariant and ordinary reconstruction methods. In all experiments, we use the leaky ReLU function as the nonlinearity ($\phi$ in Equation~\eqref{eq:define_prox_model}), applied pointwise:
\[\phi(x) =
  \begin{cases}
    x\quad&\text{if}\quad x>0,\\
    0.01x\quad &\text{else.}
  \end{cases}
\]
Each training run is performed on a computer with an Intel Xeon Gold 6140 CPU and a NVIDIA Tesla P100 GPU. Training the equivariant methods requires slightly more computational effort than the ordinary methods: to begin with, given the specification of the architecture, bases need to be computed for the equivariant convolution kernels (this takes negligible effort compared to the effort expended in training). Besides this, each training iteration requires the computation of the convolutional filter from its parameters and the basis functions and the backpropagation through this basis expansion. To give an example of the extra computational effort required, we have timed 100 training iterations for comparable equivariant and ordinary methods for the MRI reconstruction problem: this took 35.5 seconds for the ordinary method and 41.9 seconds for the equivariant method, an increase of 18\%. Note that at test time, however, the ordinary and equivariant methods can be computed with the same effort.
\subsection{CT experiment: varying the size of the training set}
\label{sec:ct_rot}
In this experiment, we study the effect of varying the size of the training set on the performance of the equivariant and ordinary methods. We consider a range of training set sizes, as shown in Figure~\ref{fig:ct_rot_ssims}, and test the learned reconstruction methods on images that were not seen during training time, both in the same orientation and randomly rotated images. The violin plots displayed have the same interpretation as those shown in Figure~\ref{fig:vary_group_order} and described in Section~\ref{sec:hyperparameters}. From this comparison, we see that the equivariant method is able to better take advantage of smaller training sets than the ordinary method. Furthermore, we see that the equivariant method performs roughly equally well regardless of the orientation of the images, whereas the performance of the ordinary method drops when testing on rotated images. Figure~\ref{fig:ct_rot_recons} shows some examples of test reconstructions made with the methods learned on a training set of size $N=100$. In these reconstructions, it can be seen that the equivariant method does better at removing streaking artefacts than the ordinary method.
\begin{figure}[!htb]
  \centering
  \label{fig:ct_rot_ssims}
  \includegraphics[scale=1]{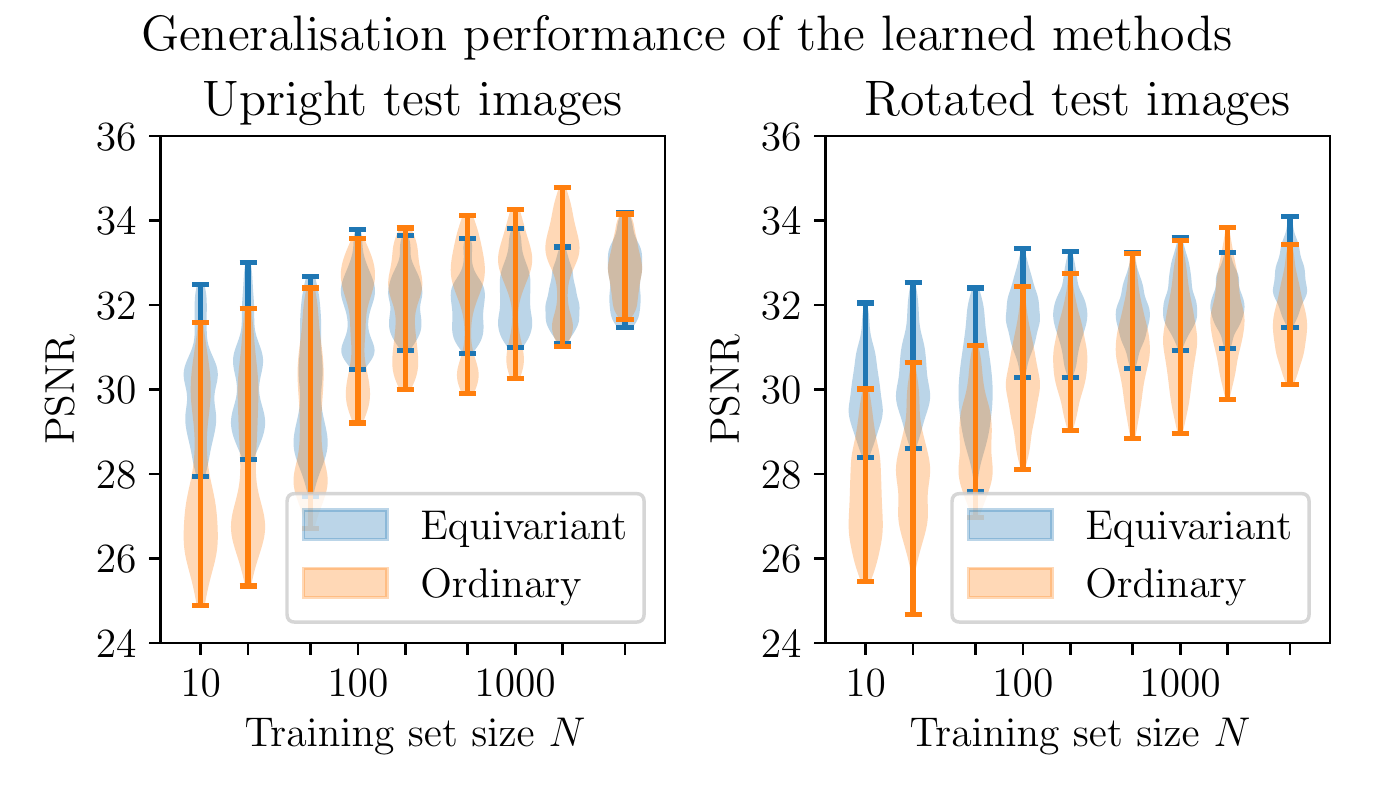}
    \caption{A comparison of the performance of equivariant and ordinary learned proximal gradient methods trained on training sets of various sizes for the CT reconstruction problem. The methods are tested on images that have not been seen during training time, both in the same orientations as were observed during training (``Upright test images'') and rotated at random angles (``Rotated test images'').}
\end{figure}

\begin{figure}[!htb]
  \centering
  \includegraphics[scale=1]{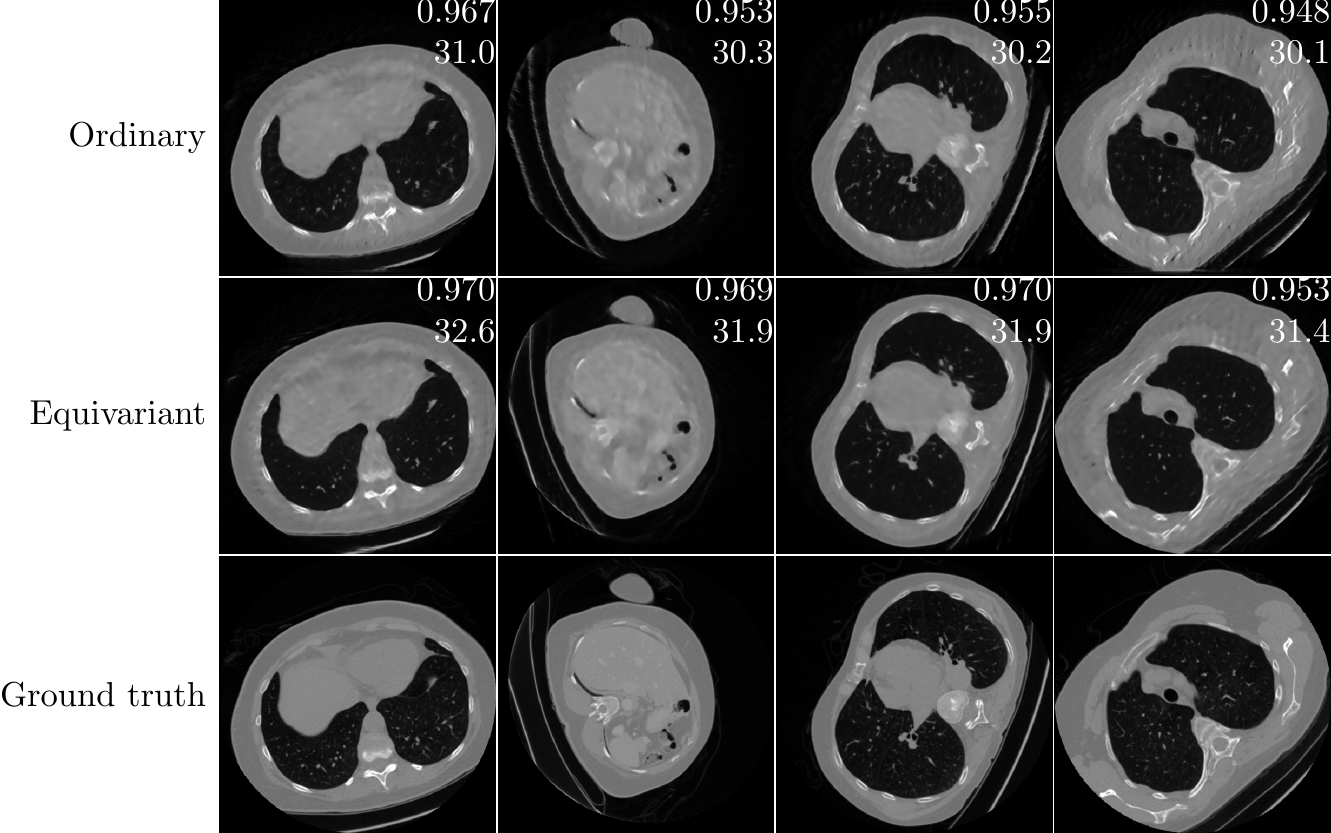}
  \caption{A random selection of test images corresponding to the plots shown in Figure~\ref{fig:ct_rot_ssims}, with a training set of size $N=100$. On each reconstruction, the top number is its SSIM and the bottom number is its PSNR w.r.t. the ground truth. The images are clipped between upper and lower attenuation coefficient limits of -1024~HU and 1023~HU.}
  \label{fig:ct_rot_recons}
\end{figure}
\subsection{MRI experiment: varying the size of the training set}
\label{sec:mri_rot}
This experiment is similar to the experiment in Section~\ref{sec:ct_rot}, but concerns the MRI reconstruction problem. A notable difference with the CT reconstruction problem is that, as a result of the Cartesian line sampling pattern, the forward operator is now less compatible with the rotational symmetry. Regardless of this, we have seen in Section~\ref{sec:methods_var_reg} that it is still sensible in this context to use equivariant neural networks in a method motivated by a splitting optimisation method. The performance differential between the equivariant and ordinary methods is more subtle than in the CT reconstruction problems. In Figure~\ref{fig:mri_rot_ssims}, we see that the equivariant method can again take better advantage of smaller training sets and is more robust to images dissimilar to those seen in training. Figure~\ref{fig:mri_rot_recons} shows examples of reconstructions made with the methods learned on a training set of size $N=50$.
\begin{figure}[!htb]
  \centering
  \includegraphics[scale=1]{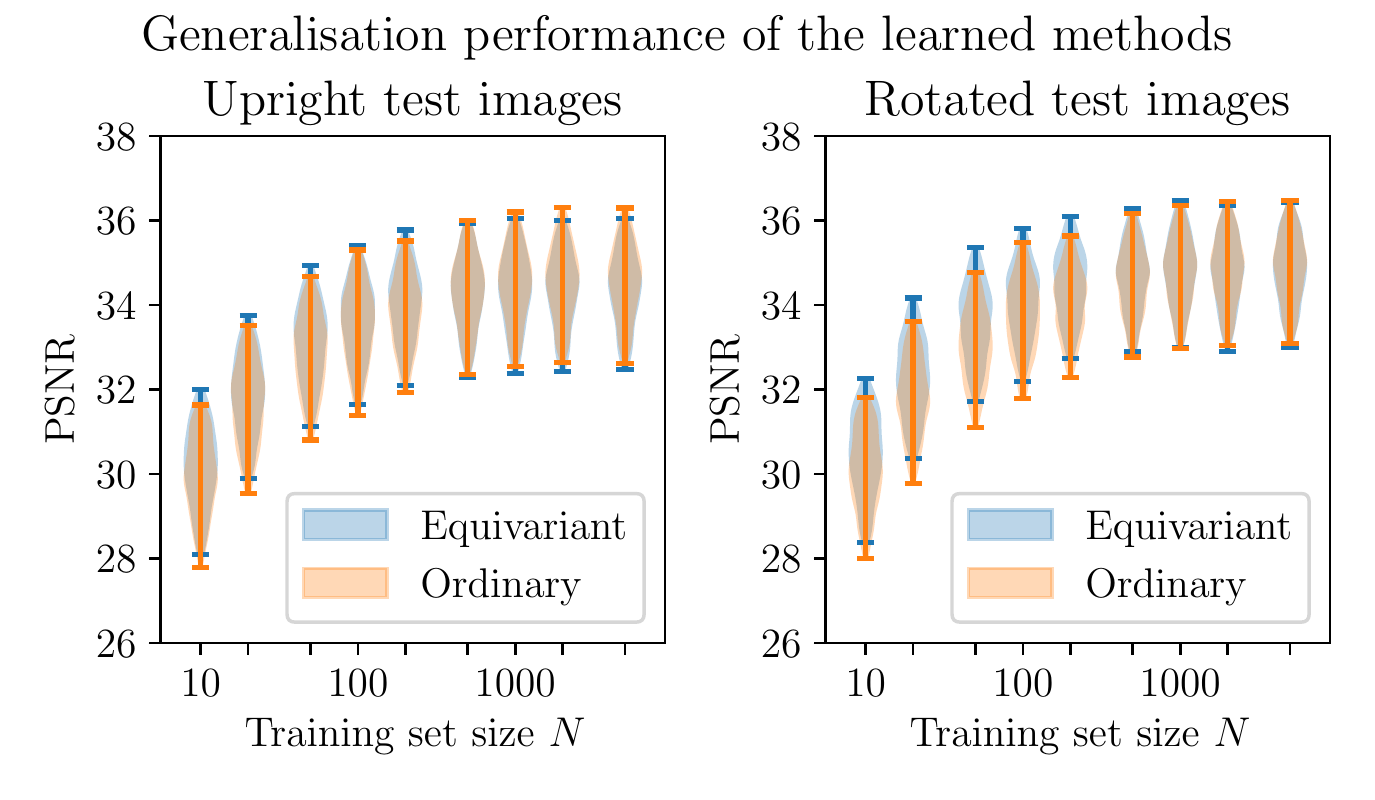}  
  \caption{A comparison of the performance of equivariant and ordinary learned proximal gradient methods trained on training sets of various sizes for the MRI reconstruction problem. The methods are tested on images that have not been seen during training time and that have been rotated at random angles.}
  \label{fig:mri_rot_ssims}
\end{figure}
\begin{figure}[!htb]
  \centering
  \includegraphics[scale=1]{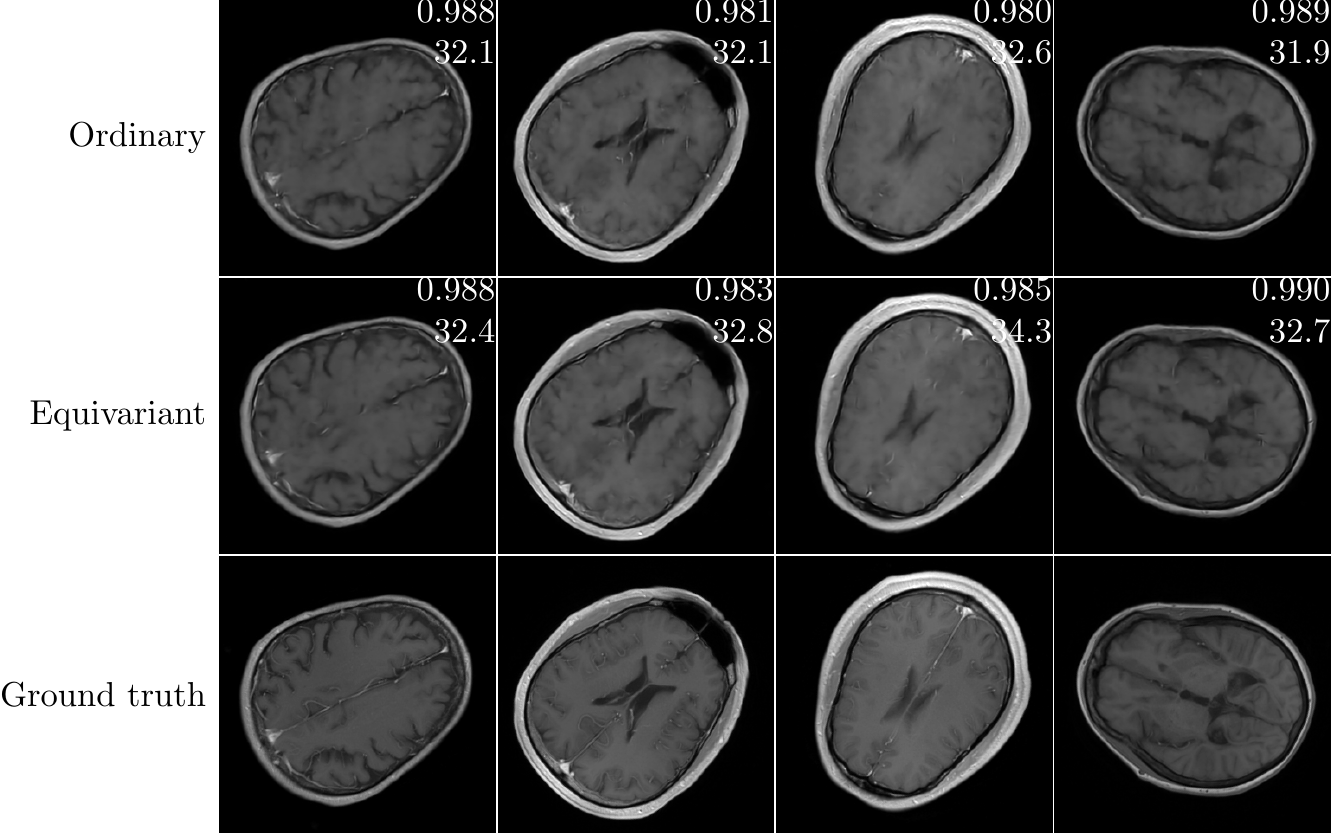}
  \caption{A random selection of test images corresponding to the plots shown in Figure~\ref{fig:mri_rot_ssims}, with a training set of size $N=50$. On each reconstruction, the top number is its SSIM and the bottom number is its PSNR w.r.t. the ground truth.}
  \label{fig:mri_rot_recons}
\end{figure}

%%% Local Variables:
%%% mode: latex
%%% TeX-master: "main"
%%% End:

\section{Conclusions and Discussion}
In this work, we have shown that equivariant neural networks can be incorporated into learnable reconstruction methods for inverse problems, and that doing this in a principled way results in higher quality reconstructions with little extra effort compared to ordinary convolutional neural networks. Using roto-translationally equivariant neural networks as opposed to ordinary convolutional neural networks results in better performance when trained on smaller training sets and more robustness to rotations.

In Section~\ref{sec:hyperparameters}, we saw that that the learned methods perform best when the group $H$ is chosen to be a group of on-grid rotations. In theory, one would expect better performance with a larger number of rotations, but in practice there is the issue of how the equivariant kernels are discretised. Indeed, when solving the constraint for equivariance in Equation~\eqref{eq:conv_kernel_restriction}, the allowed kernels turn out to be circular harmonics multiplied by an arbitrary radial profile, and in practice we discretise these functions on $3\times 3$ filters. An opportunity for future work on the use of equivariant neural networks can be found in how the combination of group and discretisation should be optimised.

All of the experiments shown in this work have dealt with two-dimensional images, but the methods described here can be applied equally well to three-dimensional images, as long as the two-dimensional equivariant convolutions are replaced by their three-dimensional counterparts. The representation theory of $\SO(3)$ is more complicated than that of $\SO(2)$, but it is similarly possible to design roto-translationally equivariant convolutions in three-dimensional~\cite{weiler_3d_2018}. One potential application is mentioned in Remark~\ref{remark:vector_valued}: in diffusion tensor MRI, the domain is three-dimensional, with the additional challenge that the image that is to be recovered is a tensor field rather than a scalar field.

In the experiments that we demonstrated in this work, we focused on a single type of learned reconstruction operator, the learned proximal gradient method. In fact, the framework that we describe is not limited to this form of reconstruction algorithm. As an example of another type of learned reconstruction operator, consider the learned primal-dual method of~\cite{adler_learned_2018}. A small corollary to Proposition~\ref{prop:invariant_regulariser_implies_equivariant_prox} is that, when $J$ is invariant and the Fenchel conjugate $J^*$ is well-defined, $\prox_{J^*}$ will be equivariant in the same way that $\prox_{J}$ is. As a result, assuming reasonable invariance properties of a data discrepancy term, a learned primal-dual method can be considered where both the primal and dual proximal operators are modeled as appropriate equivariant neural networks.
%%% Local Variables:
%%% mode: latex
%%% TeX-master: "main"
%%% End:

\section*{Acknowledgements}
Data used in the preparation of this article were obtained from the NYU fastMRI Initiative database (\texttt{fastmri.med.nyu.edu}) \cite{knoll_fastmri_2020, zbontar_fastmri_2019}. As such, NYU fastMRI investigators provided data but did not participate in analysis or writing of this report. A listing of NYU fastMRI investigators, subject to updates, can be found at \texttt{fastmri.med.nyu.edu}. The primary goal of fastMRI is to test whether machine learning can aid in the reconstruction of medical images.

The authors acknowledge the National Cancer Institute and the Foundation for the National Institutes of Health, and their critical role in the creation of the free publicly available LIDC/IDRI Database used in this study \cite{armato_iii_lung_2011, armato_iii_data_2015}.

MJE acknowledges support from the EPSRC grants EP/S026045/1 and \linebreak EP/T026693/1, the Faraday Institution via EP/T007745/1, and the Leverhulme Trust fellowship ECF-2019-478. 

CE and CBS acknowledge support from the Wellcome Innovator Award RG98755.

CBS acknowledges support from the Leverhulme Trust project on ‘Breaking the non-convexity barrier’, the Philip Leverhulme Prize, the EPSRC grants EP/S026045/1 and EP/T003553/1, the EPSRC Centre Nr. EP/N014588/1, European Union Horizon 2020 research and innovation programmes under the Marie Sk\l{}odowska-Curie grant agreement No. 777826 NoMADS and No. 691070 CHiPS, the Cantab Capital Institute for the Mathematics of Information and the Alan Turing Institute. 

FS acknowledges support from the Cantab Capital Institute for the Mathematics of Information. 

EC and BO thank the SPIRIT project (No. 231632) under the Research Council of Norway FRIPRO funding scheme. 

\bibliographystyle{unsrt}
\bibliography{main.blg}

\end{document}